\def\eqref#1{equation~\ref{#1}}
\def\1{\bm{1}}
\DeclareMathAlphabet{\mathsfit}{\encodingdefault}{\sfdefault}{m}{sl}
\SetMathAlphabet{\mathsfit}{bold}{\encodingdefault}{\sfdefault}{bx}{n}
\declaretheorem[name=Lemma,numberwithin=section]{lem}
\newcommand\numberthis{\addtocounter{equation}{1}\tag{\theequation}}
\renewcommand{\d}[1]{\ensuremath{\operatorname{d}\!{#1}}}
\title{Likelihood Training of Cascaded Diffusion Models via Hierarchical Volume-preserving Maps}
\author{Henry Li$^{1}$, Ronen Basri$^{2,3}$, Yuval Kluger$^{1}$ \\
$^{1}$Yale University, 
$^{2}$Meta AI,
$^{3}$Weizmann Institute of Science \\
\texttt{\{henry.li, yuval.kluger\}@yale.edu} \\
\texttt{ronen.basri@weizmann.ac.il} \\
}
\begin{document}

\maketitle

\begin{abstract}
Cascaded models are multi-scale generative models with a marked capacity for producing perceptually impressive samples at high resolutions. In this work, we show that they can also be excellent likelihood models, so long as we overcome a fundamental difficulty with probabilistic multi-scale models: the intractability of the likelihood function. Chiefly, in cascaded models each intermediary scale introduces extraneous variables that cannot be tractably marginalized out for likelihood evaluation. This issue vanishes by modeling the diffusion process on latent spaces induced by a class of transformations we call hierarchical volume-preserving maps, which decompose spatially structured data in a hierarchical fashion without introducing local distortions in the latent space. We demonstrate that two such maps are well-known in the literature for multiscale modeling: Laplacian pyramids and wavelet transforms. Not only do such reparameterizations allow the likelihood function to be directly expressed as a joint likelihood over the scales, we show that the Laplacian pyramid and wavelet transform also produces significant improvements to the state-of-the-art on a selection of benchmarks in likelihood modeling, including density estimation, lossless compression, and out-of-distribution detection. Investigating the theoretical basis of our empirical gains we uncover deep connections to score matching under the Earth Mover's Distance (EMD), which is a well-known surrogate for perceptual similarity. Code can be found at \href{https://github.com/lihenryhfl/pcdm}{this https url}.
\end{abstract}

\section{Introduction}

A widely used strategy for generating high resolution images $\mathbf{x}$ is to first draw from a low resolution model $p_\theta(\mathbf{z}^{(1)})$, then refine this representation with a series of super-resolution models $p_\theta(\mathbf{z}^{(s)}|\mathbf{z}^{(s - 1)})$ for $k = 2, \dots, S$ where $\mathbf{z}^{(S)} = \mathbf{x}$. This idea, known as cascaded or multi-scale modeling, has been employed to great effect in generative models at large \citep{oord2016wavenet,karras2017progressive,de2019hierarchical}, as well as diffusion models in particular \citep{ho2022cascaded}. However, multi-scale models pose a fundamental issue for likelihood computation, as the modeled likelihood function --- obtained as a product of all conditional super-resolution models and the base model --- is now the joint likelihood $p_\theta(\mathbf{z}^{(1)}, \dots, \mathbf{z}^{(K)})$ rather than the desired likelihood $p_\theta(\mathbf{x}) = p_\theta(\mathbf{z}^{(K)})$. This prevents cascaded models from being used for likelihood-based training and inference on $\mathbf{x}$. For $p_\theta(\mathbf{x})$ to regain exact evaluation capabilities, the intermediate resolutions must must either be marginalized out --- a very expensive operation in high dimensions --- or undergo complex redesign to avoid such marginalization \citep{menick2018generating,reed2017parallel}.

In this work, we propose a different approach to overcoming this problem using a class of transformations we call hierarchical volume-preserving maps. These are a special subset of homeomorphisms to which the likelihood function is invariant --- i.e., a class of functions $\mathcal{H}$ such that for all $h \in \mathcal{H}$, $p_\theta(\mathbf{x}) = p_\theta(h(\mathbf{x}))$.
Under these transformations, the joint likelihood computed by a cascading model coincides directly with the desired likelihood function $p_\theta(\mathbf{x})$. When these transformations are simple and produce useful \textit{hierarchical} representations of $\mathbf{x}$, we are able to retain maximum likelihood training and evaluation capabilities in a cascaded multi-scale setting while introducing little to no computational overhead. We evaluate our multi-scale likelihood model on a selection of datasets and tasks including density estimation, lossless compression, and out-of-distribution detection and observe significant improvements to the existing state-of-the-art, demonstrating the power behind a multi-scale prior for likelihood modeling.

We subsequently turn to a theoretical analysis of our framework, and show that maximizing the log-likelihood of our proposed model is equivalent to minimizing an upper bound on the weighted sum (or integral) of Earth Mover's Distances (EMD) on the marginal scores in the diffusion process. The Earth Mover's --- or alternatively Wasserstein-$1$ --- Distance is a commonly used metric in computer vision known for its properties as a high quality surrogate for human perceptual similarity between images \citep{rubner2000earth}. However, training a diffusion model using this metric between images is expensive and grows as $\mathcal{O}(N^3 \log N)$ with the image size. Surprisingly, by utilizing a multi-scale basis, our upper bound can be evaluated in $\mathcal{O}(N)$ time, which costs the same as a standard $L^2$ norm on images. This constitutes a dramatic speed-up that greatly increases the feasibility of training with EMD. We theorize that this special property underpins the empirical success of our model in density estimation and downstream tasks.

Our contributions can be summarized as follows:
\begin{itemize}
    \item We demonstrate the existence of a class of transformations we call \textit{hierarchical volume-preserving maps} to which the likelihood function is invariant. Leveraging this class of mappings, we construct a multi-scale cascaded diffusion model that recovers the likelihood training characteristics of standard diffusion models without requiring an expensive marginalization step.
    \item We consider that two well-known multiscale decompositions, the Laplacian pyramid and wavelet transforms, and show that they are hierarchical volume preserving maps, where the linearity of the maps reduce the volume-preserving property into  more familiar notions of orthogonality and tight frames \citep{unser2011steerable}. We establish the capabilities of our model with significant improvements in likelihood modeling, lossless compression, and out-of-distribution detection performance on a selection of standard image-based density estimation benchmark datasets.
    \item Investigating the theoretical basis for this performance gain, we uncover a deep theoretical connection between EMD score matching and cascaded diffusion models trained via two special classes of hierarchical volume-preserving maps, the wavelet and Laplacian pyramid maps.
\end{itemize}

\begin{figure}
\centering
\includegraphics[width=\textwidth]{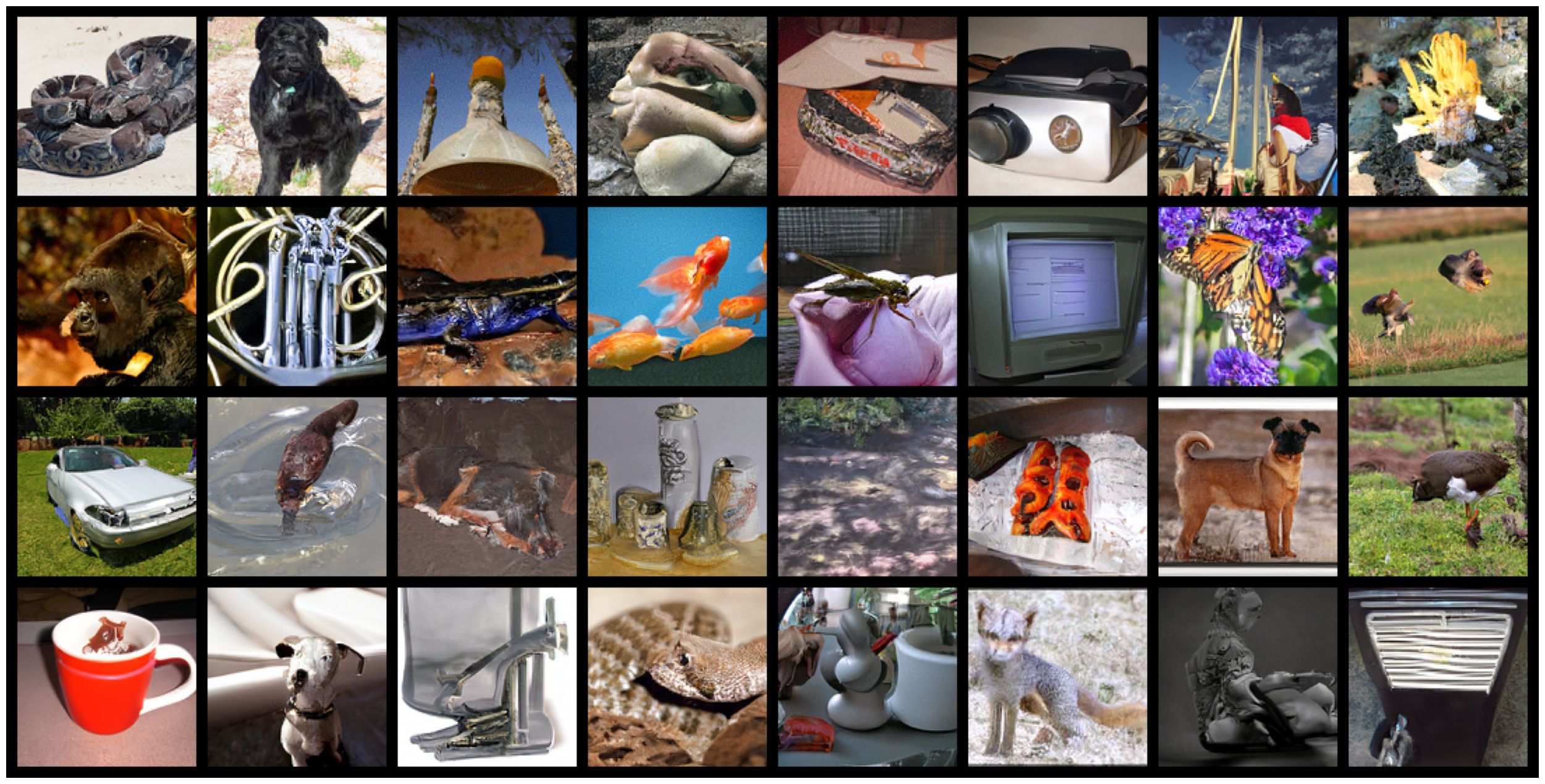}
\caption{Images generated from our W-PCDM model trained on unconditional ImageNet 128x128.}
\end{figure}

\section{Related Work}
\label{sec:related}
\paragraph{\textbf{Likelihood Training}} The maximum likelihood training of generative models involves designing a model with a tractable likelihood $p_\theta(\mathbf{x})$ --- i.e., the probability that a model with parameters $\theta$ generates an image $\mathbf{x}$ --- and maximizing this function with respect to $\theta$. This practice can be traced back to the wake-sleep algorithm \citep{hinton1995wake}, which is used to select parameters for deep belief networks and Helmholtz machines \citep{dayan1995helmholtz}. More modern treatments include variational autoencoders (VAEs) \citep{kingma2013auto}, normalizing flows (NFs) \citep{dinh2014nice}, and autoregressive models (ARMs) \citep{van2016pixel,salimans2017pixelcnn++}. Diffusion models also fit into this category, as they were originally formulated in a variational Bayesian framework \citep{sohl2015deep}. However, they are often trained with a non-probabilistic loss function \citep{ho2020denoising, song2020score} due to the instability of the originally derived loss. Recently, \citep{song2021maximum} explore likelihood training in a frequentist setting, whereas \citep{chen2021likelihood} do so via the lens of the Schrodinger Bridge problem and Forward-Backward SDEs, while \citep{kingma2021variational} introduce several modifications to the originally derived probabilistic loss function that greatly improve its stability during training and inference. But there is to our knowledge no known likelihood-based treatment of diffusion models in a multi-scale setting, to which we turn our attention.

\paragraph{\textbf{Multiscale Generative Models}}
Hierarchical modeling of spatially structured data with progressively growing resolution scales has seen widespread use across image \citep{karras2017progressive,child2020very,han2022laplacian} and audio \citep{oord2016wavenet} domains. A hierarchical structure encourages the data synthesis task to be split into smaller sequential steps, allowing models to learn spatial correlations at each resolution scale separately, and counteracts the tendency to overly focus on local structure \citep{de2019hierarchical}. In normalizing flows, \citep{yu2020wavelet} construct a multiscale flow by leveraging the orthogonality of the wavelet transform, which can be seen as a volume-preservation condition on linear equidimensional transforms --- our Eq. 8 is a generalization of this observation. In diffusion models, \citep{ho2022cascaded}; \citep{guth2022wavelet,wang2022zero} formulate a non-probabilistic hierarchical framework for diffusion models and demonstrate its effectiveness in generating high resolution images at impressive levels of perceptual fidelity. In spite of these advances, likelihood training of multi-scale models is generally unwieldy, requiring expensive marginalization, loose variational bounding, or complex factorization schemes where pixels in low-resolution images must be directly utilized as pixels in the higher resolution images \citep{reed2017parallel,menick2018generating}.

\paragraph{\textbf{Diffusion Modeling with Earth Mover's Distances}} We also show that likelihood training with our hierarchical model approximates score matching with an optimal transport cost. Existing works have also noted relationships between diffusion modeling and optimal transport \citep{de2021diffusion,chen2021likelihood,lipman2022flow,shi2022conditional,kwon2022score}, showing that diffusion models learn to generate data by transporting samples $\mathbf{x} \in \mathbb{R}^d$ along paths that minimize a cost between the prior $p(\mathbf{x}_T) \sim \mathcal{N}(\mathbf{0}, \mathbf{I})$ and data distribution $p(\mathbf{x}_0)$. Our result is markedly different: we show that our model learns the solution to an optimal transport problem in the two-dimensional coordinate space of pixel histograms. That is, we consider the pixel values of each image as an unnormalized and discretized histogram on a square in $\mathbb{R}^2$, and optimize a transport cost over this grid. This metric on images (as well as other spatial data) is well-studied, and known to correlate well with perceptual similarity \citep{rubner2000earth,zhang2020deepemd,zhao2019moverscore}. Our perpespective is thus orthogonal and complementary to prior work.

\section{Background}

\label{sec:background}


Diffusion models \citep{ho2020denoising,song2020score} are inspired by non-equilibrium thermodynamics \citep{sohl2015deep} designed to model $p_\theta(\mathbf{x}) = \int p_\theta(\mathbf{x}_{0:T}) \d{\mathbf{x}}_{1:T}$ where data $\mathbf{x}_0 := \mathbf{x}$ are related to a set of latent variables $\mathbf{x}_{1:T}$ by a diffusion process. Namely, $\mathbf{x}_{1:T} := (\mathbf{x}(t_{1}), \dots, \mathbf{x}(t_T))$ are distributed as marginals of a process governed by an It\^{o} stochastic differential equation (SDE)
\begin{equation}
    \d{\mathbf{x}} = \mathbf{f}(\mathbf{x}, t)\d{t} + g(t)\d{\mathbf{w}}
    \label{eq:ito_sde}
\end{equation}
evaluated at times $\{t_k\}_{k=1}^T$. $\mathbf{f}$ and $g$ are typically called \textit{drift} and \textit{diffusion} functions, and $\mathbf{w}$ is the standard Wiener process. Samples can then be generated by modeling the reverse diffusion, which has a simple form given by \citep{anderson1982reverse}
\begin{equation}
    \d{\mathbf{x}} = [\mathbf{f}(\mathbf{x}, t) - g(t)^2 \underbrace{\nabla_\mathbf{x} \log q(\mathbf{x}, t|\mathbf{x}_0)}_{\approx \mathbf{s}_\theta(\mathbf{x}, t)}]\d{t} + g(t)\d{\overline{\mathbf{w}}},
    \label{eq:reverse_sde}
\end{equation}
where $\overline{\mathbf{w}}$ is the corresponding reverse-time Wiener process. Training the diffusion model involves approximating the true score function $\nabla_\mathbf{x} \log q(\mathbf{x}, t|\mathbf{x}_0)$ with a neural network $\mathbf{s}_\theta(\mathbf{x}, t)$ in Eq. (\ref{eq:reverse_sde}). This can be achieved directly via score matching \citep{hyvarinen2005estimation,song2019generative,song2020score}, or by modeling the sampling process \citep{sohl2015deep,ho2020denoising,kingma2021variational}, which is obtained by approximating the marginal distributions of the discretized reverse-time Markov chain $q$ with a parametric model $p_\theta$, i.e.,
\begin{multicols}{2}
\noindent
\begin{equation}
    q(\mathbf{x}_{1:T}|\mathbf{x}_0) = q(\mathbf{x}_T) \prod_{k=1}^{T-1} q(\mathbf{x}_k | \mathbf{x}_{k + 1}, \mathbf{x}_{0})
    \label{eq:reverse_markov}
\end{equation}
\columnbreak
\noindent
\begin{equation}
    \text{and} \hspace{.1in} p_\theta(\mathbf{x}_{0:T}) = p(\mathbf{x}_T) \prod_{k=0}^{T-1} p_\theta(\mathbf{x}_k | \mathbf{x}_{k + 1}).
    \label{eq:forward_markov}
\end{equation}
\end{multicols}
Maximum likelihood estimation can then be performed by optimizing a tractable likelihood bound $\log p_\theta(\mathbf{x}_0) \geq \mathcal{L}(\mathbf{x}) = $
\begin{align*}
    \mathbb{E}_{\mathbf{x}_{1:T} \sim q}\bigg[\log p_\theta(\mathbf{x}_0|\mathbf{x}_1) - KL(q(\mathbf{x}_T|\mathbf{x}_0)||p_\theta(\mathbf{x}_T)) - \sum_{k=1}^{T-1} \log KL(q(\mathbf{x}_{k}|\mathbf{x}_{k+1}, \mathbf{x}_0) || p_\theta(\mathbf{x}_{k}|\mathbf{x}_{k+1}))\bigg],
    \numberthis
    \label{eq:unconditional_elbo}
\end{align*}
which forms the basis for variational inference with diffusion models. Several further improvements were proposed by \citep{kingma2021variational}, such as antithetic time sampling, a continuous-time relaxation of the Markov chain, and a learnable noise schedule parameterized by a monotonic neural network. Finally, unbiased estimates of $\log p_\theta(\mathbf{x})$ can be obtained by modeling the probability flow ODE (PF-ODE) corresponding to Eq. (\ref{eq:reverse_sde}) from \citep{song2020score}
\begin{equation}
    \d{\mathbf{x}} = \left[\mathbf{f}(\mathbf{x}, t) - \frac{1}{2}g(t)^2 \nabla_\mathbf{x} \log p(\mathbf{x}, t)\right]\d{t},
\end{equation}
using the continuous change of variables formula \citep{chen2018neural} and substituting the score with the score network $\mathbf{s}_\theta(\mathbf{x}, t)$.

To build a \textit{cascaded} diffusion model, $\mathbf{x}$ is further decomposed into a series of latent representations $\{\mathbf{z}^{(s)}\}_{s=1}^S$ of decreasing scales, and $p_\theta(\mathbf{z}^{(s)}|\mathbf{z}^{(s-1)})$ is approximated. Now, the data may be sampled autoregressively --- drawing first from a low-resolution prior $\mathbf{z}^{(1)} \sim p_\theta(\mathbf{z}^{(1)})$, then from a series of conditional super-resolution models $\mathbf{z}^{(s)} \sim p_\theta(\mathbf{z}^{(s)}|\mathbf{z}^{(s - 1)})$ for $s = 2, \dots, S$ and where $\mathbf{z}^{(S)} = \mathbf{x}$. However, it is in this hierarchical setting that we run into the aforementioned intractability of the likelihood function, where $p_\theta(\mathbf{z}^{(S)} = \mathbf{x})$ can only be recovered by marginalizing the modeled likelihood over each latent scale, i.e.,
\begin{equation}
    p_\theta(\mathbf{x}) = \int_{\mathbf{z}^{(1:S-1)}} p_\theta(\mathbf{z}^{(1)}, \mathbf{z}^{(2)}, \dots, \mathbf{z}^{(S)}) d \mathbf{z}^{(1)} d \mathbf{z}^{(2)} \dots \mathbf{z}^{(S - 1)}.
\end{equation}

\section{Hierarchical Volume-Preserving Maps}
Due to the intractability of the likelihood function in many hierarchical models, we are interested in multi-scale decompositions of $\mathbf{x}$ that can sidestep the need for marginalization during likelihood computation. In this section, we shall demonstrate the existence of a subset of homeomorphisms $\mathcal{H}$ we call hierarchical volume-preserving maps to which the likelihood function $p_\theta$ displays a form of probabilistic invariance --- i.e., $p_\theta(\mathbf{x}) = p_\theta(h(\mathbf{x}))$ for all $h \in \mathcal{H}$. The utility of such transformations is clear: if $h(\mathbf{x})$ is the scales of the hierarchical model, then we can directly use the joint likelihood over these scales as the desired likelihood function. Finally, we consider two simple homeomorphisms that satisfy our proposed condition: the Laplacian pyramid and wavelet transforms.

\subsection{A Probabilistic Invariance}
\label{sec:prob_invariance}
To build towards this result, we review the notion of \textit{volume-preserving maps}, which can be understood as transformations that are constrained with respect to a local measure of space expansion or contraction, the matrix volume.

\begin{restatable}[Volume-preserving Maps \citep{berger2012differential}]{defn}{vp}
\label{defn:vp}
    A homeomorphic (i.e., smooth, invertible) transformation $h: \mathcal{X} \rightarrow \mathcal{Z}$ is known to be \textit{volume-preserving} if it everwhere satisfies
    \begin{equation}
        \sqrt{\text{det}\left(A^T A\right)} = 1,
        \label{eq:vpp}
    \end{equation}
    where $\text{det}(\cdot)$ is the determinant and $A \in \mathbb{R}^{\text{dim}(\mathcal{Z}) \times \text{dim}(\mathcal{X})}$ is the Jacobian of $h(\mathbf{x})$.
\end{restatable}

Crucially, $\text{dim}(\mathcal{Z})$ may be larger than $\text{dim}(\mathcal{X})$ --- $h$ may be a manifold-valued function, so long as $A$ is full rank (i.e. $\text{rank}(A) \geq \text{dim}(\mathcal{Z})$). Our notion of \textit{hierarchical} volume-preserving maps further requires that the transformation is multi-scale \citep{mallat1999wavelet,burt1987laplacian,horstemeyer2010multiscale}, i.e., $h: \mathcal{X} \rightarrow \mathcal{Z}^{(1)} \times \dots \times \mathcal{Z}^{(S)}$, where the original data $\mathbf{x} \in \mathcal{X}$ is decomposed into a sequence of representations $\mathbf{z}^{(s)} \in \mathcal{Z}^{(s)}$ that describe the data at varying scales --- an essential property for cascaded modeling. The following result reveals the utility of working with such transformations.

\begin{restatable}[Probabilistic Invariance of Hierarchical Volume-preserving Maps]{lem}{pir}
\label{lem:pwd}
Let $h$ be a hierarchical volume-preserving map such that $h(\mathbf{x}) = (\mathbf{z}^{(1)}, \mathbf{z}^{(2)}, \dots, \mathbf{z}^{(S)})$, and $p_\theta$ be a likelihood function on $\mathbf{z}^{(1)}, \mathbf{z}^{(2)}, \dots, \mathbf{z}^{(S)}$. Then the likelihood function with respect to the original data $p_\theta(\mathbf{x})$ can be recovered by the simple relation
\begin{equation}
    \log p_\theta(\mathbf{x}) = \log p_\theta[h(\mathbf{x})].
    \label{eq:wavelet_likelihood}
\end{equation}
\end{restatable}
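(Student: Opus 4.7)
The plan is to establish the identity by a direct application of the change of variables formula, using the volume-preserving condition from Definition \ref{defn:vp} to trivialize the Jacobian factor. The key observation is that when $\dim(\mathcal{Z}^{(1)} \times \cdots \times \mathcal{Z}^{(S)}) \geq \dim(\mathcal{X})$, the map $h$ sends $\mathcal{X}$ diffeomorphically onto a $\dim(\mathcal{X})$-dimensional submanifold of the product space, and the density on that image is governed by the area formula from geometric measure theory.

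First, I would write the pushforward density explicitly. Since $h$ is a homeomorphism onto its image with full-rank Jacobian $A$, the pushforward of the distribution of $\mathbf{x}$ under $h$ is supported on $h(\mathcal{X})$, and its density with respect to the natural Hausdorff measure on that submanifold is
\[
p_\theta\bigl(h(\mathbf{x})\bigr) \;=\; \frac{p_\theta(\mathbf{x})}{\sqrt{\det(A^{T} A)}}.
\]
This is the standard area formula; the denominator is strictly positive by the full-rank assumption, and it reduces to the familiar $|\det(A)|$ in the equidimensional case.

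Second, I would invoke the volume-preserving property (Eq.~\ref{eq:vpp}), which asserts $\sqrt{\det(A^{T} A)} = 1$ pointwise on $\mathcal{X}$. Substituting into the display above collapses the Jacobian factor, yielding the pointwise equality $p_\theta(h(\mathbf{x})) = p_\theta(\mathbf{x})$. Taking logarithms gives Eq.~(\ref{eq:wavelet_likelihood}). The hierarchical structure of the codomain plays no role in this step: it enters only in how we later \textit{factor} $p_\theta(\mathbf{z}^{(1)}, \ldots, \mathbf{z}^{(S)})$ into a cascade of conditional scale models.

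The main subtlety I anticipate is the measure-theoretic interpretation of $p_\theta[h(\mathbf{x})]$ in the over-complete regime $\dim(\mathcal{Z}) > \dim(\mathcal{X})$, relevant for the Laplacian pyramid. There $h(\mathcal{X})$ is a Lebesgue-null submanifold of the ambient product space, so $p_\theta$ cannot be interpreted as a density against the ambient Lebesgue measure but rather against the $\dim(\mathcal{X})$-dimensional Hausdorff measure on the image. This is precisely the setting in which the generalized change of variables uses the Gram determinant $\sqrt{\det(A^{T} A)}$ instead of $|\det(A)|$, and it is also precisely why the volume-preserving condition is stated in that form. Once this identification is made, no further obstacles remain; the rest is symbol pushing.
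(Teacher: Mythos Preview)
Your proposal is correct and follows essentially the same route as the paper: invoke the generalized change-of-variables formula with the Gram determinant $\sqrt{\det(A^T A)}$, then apply the volume-preserving condition (Eq.~\ref{eq:vpp}) to collapse that factor to $1$ and take logs. Your additional discussion of the measure-theoretic interpretation in the over-complete case is a helpful clarification that the paper simply absorbs into its citation of the Riemannian change-of-variables result.
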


In other words, the joint likelihood given by the cascaded model under a hierarchical volume-preserving map is precisely the desired likelihood function with respect to the data $p_\theta(\mathbf{x})$, and we no longer require any additional computation or design considerations to remove the intermediary scales. In this sense, Eq. \ref{eq:wavelet_likelihood} displays a probabilistic invariance to $h$ \footnote{Formally, letting $g_\mathbf{x}(h) = p_\theta(h(\mathbf{x}))$, the set of hierarchical volume-preserving maps $\mathcal{H}$ form an equivalence class under the relation $g_\mathbf{x}(h) = g_\mathbf{x}(h’)$ for $h, h’ \in \mathcal{H}$ and $\mathbf{x} \in \mathcal{X}$.}.

\paragraph{\textbf{Standard Cascaded Hierarchy}} We now demonstrate how the standard hierarchy of a cascaded model such as in \citep{karras2017progressive} or \citep{ho2022cascaded} is not a hierarchical volume-preserving map. Here, the scales $\mathbf{z}^{(1)}, \mathbf{z}^{(2)}, \dots, \mathbf{z}^{(S)}$ are subsampled versions of the full-resolution sample $\mathbf{x}$, with each scale twice the resolution of the preceding scale. Consider a simple illustrative example in $\mathbb{R}^d$. When using the nearest neighbor resampling method, $h$ may be defined as the concatenation of $S$ different identity-like functions
\begin{equation}
    h
    = 
    [
    \mathbf{I}_{d \times d}, 
    2^{-1} \cdot \mathbf{I}_{d_1 \times d_1}^{(2)}, 
    2^{-2} \cdot \mathbf{I}_{d_2 \times d_2}^{(3)},
    \dots,
    2^{-(S-1)} \cdot \mathbf{I}_{d_{S-1} \times d_{S-1}}^{(S)}
    ]
    ,
\end{equation}
where $d_k := d / 2 ^ k$ and $\mathbf{I}_{d \times d}^{\ell}$ denotes a $d \times d$ identity matrix where each column is repeated $\ell$ times. Inspecting this operator we find that its determinant grows exponentially with $d$. Therefore, while hierarchical, the standard transformation is certainly not volume-preserving. Accordingly, as has been established, there is no tractable form for the likelihood function $p_\theta(\mathbf{x})$ in terms of the joint likelihood $p_\theta(\mathbf{z}^{(1)}, \mathbf{z}^{(2)}, \dots, \mathbf{z}^{(S)})$.

\subsection{Special Instances of Hierarchical Volume-preserving Maps}
\label{sec:reparameterizations}
While our formulation is fully general for arbitrary homeomorphisms $h$ that satisfy Eq. \ref{eq:vpp}, we highlight two special and well-known classes of maps that are particularly noteworthy for their ease of implementation, computational tractability, and theoretical properties (Section \ref{sec:emd_connection}). 

\paragraph{\textbf{Laplacian Pyramids}} Though the standard cascaded hierarchy is not a hierarchical volume-preserving map, it is closely related to the Laplacian pyramid. This representation involves a set of band-pass images, each spaced an octave apart, obtained by taking differences between a sequence of low-pass filtered images with their unfiltered counterparts. Formally, consider base images $\mathbf{x}$ of size $j \times j$ and let $d:\mathcal{R}^{j \times j} \rightarrow \mathcal{R}^{(j / 2) \times (j / 2)}$ and $u:\mathcal{R}^{j \times j} \rightarrow \mathcal{R}^{(j \cdot 2) \times (j \cdot 2)}$ denote downsampling and upsampling operators respectively. In this work, we choose them to be norm-preserving bilinear resamplers (i.e., images are scaled by a factor of $2$ every downsample and a factor of $1/2$ every upsample). Then $\mathbf{z}^{(S)}$ can be written as
\begin{equation}
    \mathbf{z}^{(S)} = \mathbf{x} - u(d(\mathbf{x})), 
\end{equation}
and the intermediary multi-scale representations $\mathbf{z}^{(2)}, \mathbf{z}^{(2)}, \dots, \mathbf{z}^{(S - 1)}$ may be constructed via the recursive relation
\begin{align}
    \mathbf{y}^{(s)} = d(\mathbf{y}^{(s + 1)}), \label{eq:lp_y}
    \hspace{.2in}
    \text{and}
    \hspace{.2in}
    \mathbf{z}^{(s)} = \mathbf{y}^{(s)} - u(d(\mathbf{y}^{(s)}))
\end{align}
for $s=2, \dots, S-1$ and $\mathbf{y}^{(S)} := \mathbf{x}$. Finally, we define the base scale containing the low-frequency residual as $\mathbf{z}^{(1)} = \mathbf{y}^{(1)} = d(\mathbf{y}^{(2)})$.

\begin{wrapfigure}{r}{0.45\textwidth}
  \begin{center}
    \includegraphics[width=.9\textwidth]{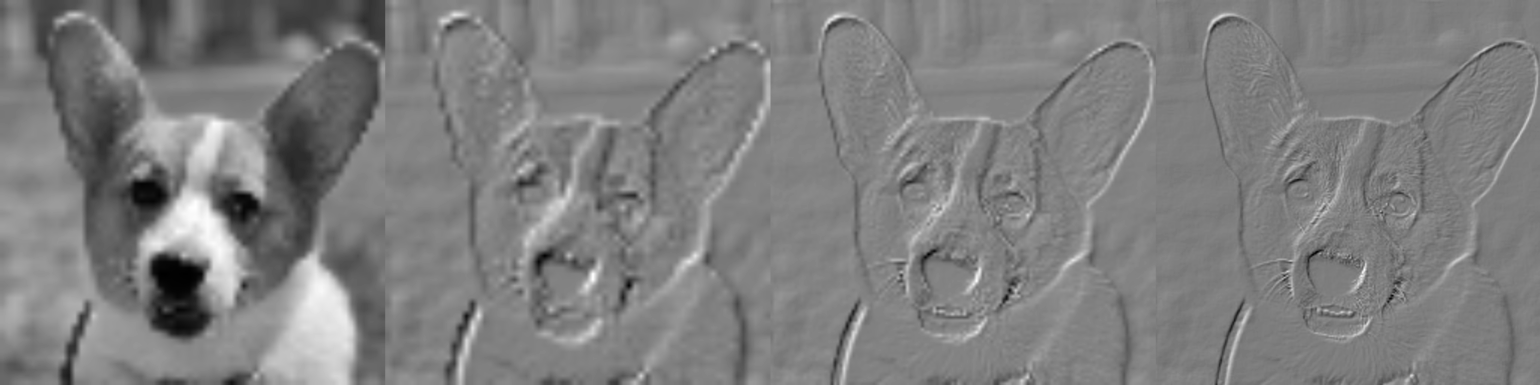}
  \end{center}
  \caption{A Laplacian pyramid hierarchy with $S=4$. Left to right: $z^{(1)}, \dots, z^{(4)}$.}
\end{wrapfigure}

We now verify that the Laplacian pyramid is a hierarchical volume-preserving map. Clearly, the Laplacian pyramid is multi-scale given a proper choice of $d$ and $u$. In our case, we simply choose the bilinear resampler. Moreover, the Laplacian pyramid is invertible --- to reconstruct $\mathbf{x}$ from $\{\mathbf{z}^{(s)}\}_{s=1}^S$, we simply invert the recursive relation, letting $\mathbf{y}^{(1)} = \mathbf{z}^{(1)}$, and computing 
\begin{align}
    \mathbf{y}^{(s + 1)} &= u(\mathbf{y}^{(s)}) + \mathbf{z}^{(s + 1)}
\end{align}
for $s = 1, \dots S - 1$, where $\mathbf{x} = \mathbf{y}^{(S)}$.
Finally, to see that the Laplacian pyramid is volume-preserving, we observe that it forms a tight frame \citep{unser2011steerable} and therefore satisfies a generalization of Parseval's Equality, i.e., $||h(\mathbf{x})||_2^2 = ||\mathbf{x}||_2^2$. Since this holds for all $\mathbf{x}$, we can conclude that the largest and smallest singular values of $h$ are both $1$.

\begin{wrapfigure}{r}{0.45\textwidth}
  \begin{center}
    \includegraphics[width=.9\textwidth]{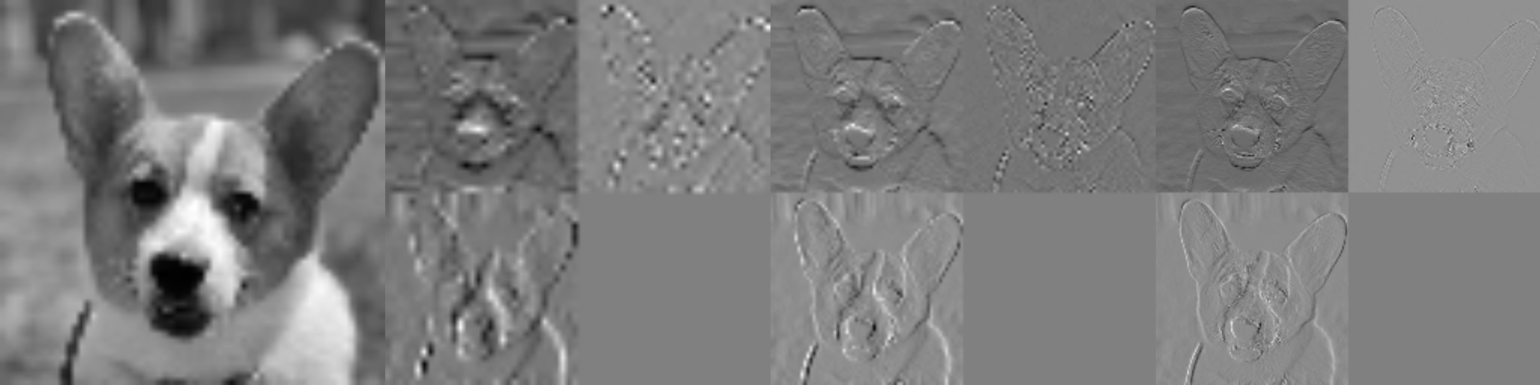}
  \end{center}
  \caption{A wavelet hierarchy with $S=4$. Left to right: $z^{(1)}, \dots, z^{(4)}$.}
\end{wrapfigure}

\paragraph{\textbf{Wavelet Decomposition}} A more efficient hierarchical encoding of $\mathbf{x}$ can be obtained by enforcing the orthogonality of $h$. While the Laplacian pyramid representation specifies a mapping into a latent space of higher dimension $\mathcal{R}^d \rightarrow \mathcal{R}^{2d}$, the wavelet transform is orthogonal (and thus $\mathcal{R}^d \rightarrow \mathcal{R}^d$). There are many types of wavelet transforms \citep{rioul1991wavelets,graps1995introduction}, each constructed from scaled and translated versions of a respective (mother) wavelet. We focus on the Haar wavelet \citep{stankovic2003haar} as it is widely used and permits a simple implementation. In this case, the wavelet transform $\mathbf{W}$ can be further described in terms of outer products of the matrices $L = \frac{1}{\sqrt{2}}[1, 1]$ and $H = \frac{1}{\sqrt{2}}[-1, 1]$. There are four kernels total, $HH^T, HL^T, LH^T$, and $LL^T$, each of size $2 \times 2$. Letting $*$ be the convolution operator with stride 2, the scales of the wavelet hierarchy can be written as
\begin{equation}
    \mathbf{W}(\mathbf{y}^{(s)}) = [\underbrace{HH^T * \mathbf{y}^{(s)}}_{\mathbf{z}_{HH}^{(s)}}, \underbrace{HL^T * \mathbf{y}^{(s)}}_{\mathbf{z}_{HL}^{(s)}}, \underbrace{LH^T * \mathbf{y}^{(s)}}_{\mathbf{z}_{LH}^{(s)}}, \underbrace{LL^T * \mathbf{y}^{(s)}}_{\mathbf{z}_{LL}^{(s)}}] \hspace{.1in} \text{and} \hspace{.1in} \mathbf{y}^{(s)} = \mathbf{z}_{LL}^{(s + 1)},
    \label{eq:wav_def}
\end{equation}
for $s=2, \dots, S$, where $\mathbf{y}^{(S)} = \mathbf{z}_{LL}^{(S + 1)} := \mathbf{x}$. The $s=2,\dots,S$th scale representations $\mathbf{z}^{(s)} = \mathbf{z}_{HH}^{(s)}, \mathbf{z}_{HL}^{(s)}, \mathbf{z}_{LH}^{(s)}$ can be seen as the high frequency subbands of $\mathbf{x}$ representing horizontal, vertical, and diagonal edge information, while $\mathbf{z}^{(s)}_{LL}$ are the low frequency residuals. Like with Laplacian pyramids, $\mathbf{z}^{(1)} := \mathbf{y}^{(1)}$ is taken to be the coarsest scale. Finally, due to its orthonormality, it is clear that this mapping satisfies the volume-preserving property of Eq. \ref{eq:vpp}, and is thus a hierarchical volume-preserving map.

\section{Likelihood Training of Cascaded Diffusion Models}
\label{sec:wdpm}

In this section, we construct the probabilistic cascaded diffusion model (PCDM), which is a multi-scale hierarchical model capable of likelihood training and inference. We then investigate the theoretical properties of our model, and uncover a rigorous connection between likelihood training and score matching with respect to an optimal transport cost.

\subsection{Coupled Diffusion Modeling on a $h$-induced Latent Space}
Letting $h: \mathcal{X} \rightarrow \mathcal{Z}^{(1)} \times \dots \times \mathcal{Z}^{(S)}$ be a hierarchical volume-preserving map (where $\mathbf{x} \in \mathcal{X}$ and $\mathbf{z}^{(s)} \in \mathcal{Z}^{(s)}$), we return to the cascaded diffusion model on $h(\mathbf{x}) = (\mathbf{z}^{(1)}, \mathbf{z}^{(2)}, \dots, \mathbf{z}^{(S)})$ introduced in Section \ref{sec:background}, which is defined by way of a standard unconditional diffusion model $p_\theta(\mathbf{z}^{(1)})$ and a sequence of conditional super-resolution diffusion models $p_\theta(\mathbf{z}^{(s)}|\mathbf{z}^{(<s)})$. This produces $S$ \textit{coupled} diffusion processes, where each scale $\mathbf{z}^{(s)}$ is governed by its own It\^{o} SDE with dynamics that depend on the processes that precede it.

By Lemma \ref{lem:pwd}, we see that the desired likelihood function $p_\theta(\mathbf{x})$ can be recovered by the simple relation
\begin{equation}
    \log p_\theta(\mathbf{x}) = \log p_\theta(\mathbf{z}^{(1)}) + \sum_{s = 2}^S \log p_\theta(\mathbf{z}^{(s)} | \mathbf{z}^{(<s)}).
\end{equation}
Likelihood training then proceeds by lower bounding each likelihood with its corresponding loss function --- $\mathcal{L}(\mathbf{z}^{(1)})$ as in Eq. \ref{eq:unconditional_elbo} for the unconditional base model, and
\begin{align*}
    \mathcal{L}(\mathbf{z}^{(s)} | \mathbf{z}^{(<s)}) &= \mathbb{E}_{\mathbf{x}_{1:T} \sim q}\bigg[\log p_\theta(\mathbf{z}^{(s)}_0|\mathbf{z}^{(s)}_1, \mathbf{z}^{(<s)}_0) - KL(q(\mathbf{z}_T^{(s)}|\mathbf{z}_0^{(s)}, \mathbf{z}^{(<s)}_0)||p_\theta(\mathbf{z}_T^{(s)} | \mathbf{z}^{(<s)}_0)) \\ &\hspace{.7in} - \sum_{k=1}^{T-1} KL(q(\mathbf{z}^{(s)}_{k}|\mathbf{z}^{(s)}_{k+1}, \mathbf{z}^{(s)}_0, \mathbf{z}^{(<s)}_0) || p_\theta(\mathbf{z}^{(s)}_{k}|\mathbf{z}^{(s)}_{k+1}, \mathbf{z}^{(<s)}_0))\bigg]
    \numberthis
    \label{eq:conditional_elbo}
\end{align*}
for the conditional super-resolution models. Combining all terms, the full training loss of our cascaded diffusion model may be written as
\begin{equation}
    \mathcal{C}(\mathbf{x}) = \mathcal{L}(\mathbf{z}^{(1)}) + \sum_{s = 2}^S \mathcal{L}(\mathbf{z}^{(s)} | \mathbf{z}^{(<s)}),
    \label{eq:cascading_loss}
\end{equation}
over all $\mathbf{x}$ in the dataset.


\subsection{A Connection to Optimal Transport}
\label{sec:emd_connection}
A notable property of distances in the latent spaces induced by maps from Section \ref{sec:reparameterizations} is their connection to an optimal transport cost. We begin by introducing the Wasserstein distance on the image domain.

\begin{restatable}[Wasserstein $p$-Metric]{defn}{wpm}
\label{defn:wpm}
Let $\mathbf{x}, \mathbf{y} \in \mathbb{R}^{H \times W}$ be images, which we interpret as unnormalized densities on the product space of pixel indices $\{h\}_{h=1}^H \times \{w\}_{w=1}^W$. The Wasserstein $p$-metric can be written as
\begin{equation}
    W_p(\mathbf{x}, \mathbf{y}) = \left( \inf_\nu \sum_{i,j,k,\ell} \nu(i,j,k,\ell) ||\mathbf{x}_{ij} - \mathbf{y}_{k\ell}||^p \right)^{1/p},
\end{equation}
where $\mathbf{x}_{ij}$ (or $\mathbf{y}_{ij}$) is the $i,j$th pixel of $\mathbf{x}$ (or $\mathbf{y}$) and $\nu(i,j,k,\ell)$ is a scalar-valued function such that $\sum_{i,j} \nu(i,j,k,\ell) = \mathbf{x}_{k\ell}$ and $\sum_{k, \ell} \nu(i,j,k,\ell) = \mathbf{x}_{ij}$.
\end{restatable}

Our next statement builds on prior theoretical work in linear-time approximations to the Earth Mover's Distance \citep{shirdhonkar2008approximate}.
\begin{restatable}[Cascaded Diffusion Modeling and EMD Score Matching]{thm}{wdemd}
\label{thm:wd_emd}
Let $h$ be one of the hierarchical volume-preserving maps defined in Section \ref{sec:reparameterizations}, and $p_\theta$ be a cascaded diffusion model on $h(\mathbf{x}) = (\mathbf{z}^{(1)}, \mathbf{z}^{(2)}, \dots, \mathbf{z}^{(S)})$. Then there exists a constant $\alpha$ depending only on the map $h$ such that
\begin{equation}
\alpha \mathcal{C}(\mathbf{x}) \geq \sum_{k=1}^{T-1} w_k \mathbb{E}_{\tilde{\mathbf{x}}_k \sim q(\mathbf{x}_k|\mathbf{x}_0)}\left[W_p(\nabla_\mathbf{x} \log q(\tilde{\mathbf{x}}_k|\mathbf{x}_0), \mathbf{s}_\theta(\tilde{\mathbf{x}}_k, t_k))\right],
\end{equation}
where $w_k$ is the likelihood weighting of the variational lower bound, $q(\mathbf{x}_k|\mathbf{x}_0)$ is the marginal of the forward diffusion process, and $0 < p \leq 1$.
\end{restatable}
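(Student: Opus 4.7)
The plan is to first reduce the cascaded loss $\mathcal{C}(\mathbf{x})$ to a weighted sum of squared score-matching errors across scales, then use a multiscale linear-time approximation of the Wasserstein-$1$ distance in the spirit of Shirdhonkar \& Jacobs (2008) to convert that per-scale sum into a single EMD bound in the original image domain, and finally use monotonicity $W_p \leq W_1$ for $0 < p \leq 1$ to conclude.

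\textbf{Step 1 (ELBO to score-matching).} Starting from the per-scale ELBO in Eq. (\ref{eq:conditional_elbo}), I invoke the standard Gaussian-transition derivation used by Ho et al. and Kingma et al.~to rewrite each KL term as a constant multiple of $\|\nabla_{\mathbf{z}^{(s)}} \log q(\mathbf{z}^{(s)}_k|\mathbf{z}^{(s)}_0, \mathbf{z}^{(<s)}_0) - \mathbf{s}_\theta^{(s)}\|_2^2$. Summing across scales and using Lemma~\ref{lem:pwd} to identify the concatenated per-scale score residuals with the coefficients of the full-resolution score residual under the hierarchical map $h$ yields
\begin{equation*}
    \mathcal{C}(\mathbf{x}) \;\gtrsim\; \sum_{k=1}^{T-1} w_k \, \mathbb{E}_{\tilde{\mathbf{x}}_k}\Bigl[\,\sum_{s=1}^S \bigl\|r_k^{(s)}(\tilde{\mathbf{x}}_k)\bigr\|_2^2\,\Bigr],
\end{equation*}
where $r_k^{(s)}$ is the residual at scale $s$ and timestep $k$.

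\textbf{Step 2 (${\ell^2}\to{\ell^1}$ in the hierarchical basis).} At each scale, Cauchy–Schwarz gives $\|r_k^{(s)}\|_1 \leq \sqrt{d_s}\|r_k^{(s)}\|_2$, where $d_s$ is the dimension of $\mathcal{Z}^{(s)}$. Combining with Jensen over the timestep/scale sum, I can upper bound a scale-weighted $\ell^1$ norm $\sum_s \gamma_s \|r_k^{(s)}\|_1$ of the wavelet/Laplacian coefficients of the score residual by a constant times $\mathcal{C}(\mathbf{x})$, with coefficients $\gamma_s$ determined by $d_s$ and the variance schedule of the diffusion.

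\textbf{Step 3 (Shirdhonkar–Jacobs EMD bound and weight matching).} The technical core is the result of Shirdhonkar \& Jacobs (2008): for signed measures of zero mean on a bounded 2D grid, $W_1$ is equivalent (up to multiplicative constants depending only on the wavelet) to a scale-weighted $\ell^1$ norm of wavelet coefficients, with weights of the form $2^{-j(1 + d/2)}$ where $j$ indexes the scale. Because both the Haar wavelet and (by the tight-frame / Parseval property shown in Section \ref{sec:reparameterizations}) the Laplacian pyramid produce such a hierarchical coefficient representation, the per-scale residuals $r_k^{(s)}$ are precisely the wavelet coefficients of $\nabla_\mathbf{x} \log q - \mathbf{s}_\theta$ in the corresponding band. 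The main bookkeeping task is to verify that the coefficients $\gamma_s$ from Step 2, together with the resampler/orthonormality normalizations of $h$, can be matched up to a single global constant $\alpha$ (depending only on $h$) with the $2^{-j(1+d/2)}$ weights required by Shirdhonkar–Jacobs. Doing so yields
\begin{equation*}
    \alpha\,\mathcal{C}(\mathbf{x}) \;\geq\; \sum_{k=1}^{T-1} w_k \, \mathbb{E}_{\tilde{\mathbf{x}}_k}\bigl[\,W_1(\nabla_\mathbf{x}\log q(\tilde{\mathbf{x}}_k|\mathbf{x}_0), \mathbf{s}_\theta(\tilde{\mathbf{x}}_k,t_k))\,\bigr].
\end{equation*}

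\textbf{Step 4 (Extension to $0<p\leq 1$).} Since Wasserstein metrics are monotone in their order, $W_p \leq W_1$ for $0 < p \leq 1$ (an immediate consequence of Jensen's inequality applied to the convex map $t\mapsto t^{1/p}$ inside the infimum in Definition \ref{defn:wpm}), so the bound from Step 3 implies the stated inequality for every $p$ in the claimed range.

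The main obstacle is Step 3: carefully verifying that the ELBO-induced weights $w_k$, the $\sqrt{d_s}$ factors from the $\ell^2\to\ell^1$ conversion, and the intrinsic scale normalizations of the Haar or bilinear Laplacian filters all align with the geometric $2^{-j(1+d/2)}$ weighting demanded by Shirdhonkar–Jacobs, so that a single global $\alpha$ (independent of the data and depending only on $h$) suffices. A secondary but lighter issue is ensuring the score residual satisfies the zero-mean condition required by the Shirdhonkar–Jacobs equivalence, which follows from the fact that both $\nabla_\mathbf{x}\log q$ and $\mathbf{s}_\theta$ integrate (against the appropriate Gaussian kernel) to zero over the image domain modulo a constant that cancels in the difference.
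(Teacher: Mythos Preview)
Your overall strategy matches the paper's: reduce the cascaded ELBO to a sum of per-scale squared score-matching residuals, then invoke the Shirdhonkar--Jacobs wavelet-$\ell^1$ equivalence for $W_p$, with Cauchy--Schwarz bridging $\ell^1$ and $\ell^2$. Two places where the paper is simpler than your outline:

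\textbf{Step 3 is not an obstacle.} You flag the weight-matching between your $\gamma_s$ and the Shirdhonkar--Jacobs weights $2^{-s(p+n/2)}$ as the main difficulty. The paper dispenses with this in one line: since $s(p+n/2)\geq 0$ for every $s\geq 1$, $n\geq 1$, $p\in[0,1]$, one has $2^{-s(p+n/2)}\leq 1$ uniformly, so the weighted $\ell^1$ sum in Shirdhonkar--Jacobs is bounded above by the \emph{unweighted} sum $\max(C_0,C_1)\sum_s\|r^{(s)}\|_1$. After Cauchy--Schwarz this is $\leq \beta\sum_s\|r^{(s)}\|_2$ with $\beta$ depending only on the wavelet constants and the fixed scale dimensions --- hence only on $h$. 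No matching of geometric decay rates is needed.

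\textbf{Step 4 is redundant.} The Shirdhonkar--Jacobs result already holds for all $p\in(0,1]$, not just $p=1$; the $p$-dependence sits entirely in the exponent $2^{-s(p+n/2)}$, which you have just bounded by $1$ independently of $p$. So the paper obtains the full range directly, without passing through $W_p\leq W_1$.

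\textbf{One point you underspecify.} For the Laplacian pyramid you appeal to the tight-frame property, but Shirdhonkar--Jacobs is stated for wavelet coefficients. The paper closes this by showing that, with the norm-preserving bilinear resampler, the Laplacian auxiliary variables $\mathbf{y}^{(s)}$ coincide with those of the Haar hierarchy (because $d(\cdot)=LL^T*(\cdot)$), and then that $\|\mathbf{z}^{(s)}_{\mathrm{lp}}\|_2=\|\mathbf{z}^{(s)}_{\mathrm{w}}\|_2$ at every scale; hence the per-scale $\ell^2$ losses agree and the wavelet bound transfers verbatim. Your sketch should include this identification rather than relying on the frame property alone. The zero-mean side condition you raise is not addressed in the paper's argument.
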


Theorem \ref{thm:wd_emd} reveals that optimizing Eq. \ref{eq:cascading_loss} can also be interpreted as solving an optimal transport problem between the true and approximated scores $\nabla_\mathbf{x} \log q(\tilde{\mathbf{x}}_k|\mathbf{x}_0)$ and $ \mathbf{s}_\theta(\tilde{\mathbf{x}}_k, t_k)$ with respect to the Wasserstein $p$-metric, for $p \in [0, 1]$. Of note is the $p=1$ case, which is also regarded as the Earth Mover's Distance (EMD), a well-known measure in computer vision understood to correlate well with human perceptual similarity \citep{rubner2000earth}. By giving up exact computation of the optimal transport cost, we are able to optimize an $\mathcal{O}(n^3 \log n)$ measure in linear time,  bringing the EMD measure into the realm of feasible criteria to train diffusion models with in high dimensions.

\begin{table}
  \centering
  \resizebox{\textwidth}{!}{
  \begin{tabular}{lcccc}
    \toprule
    \textbf{Model}     & CIFAR10 & ImageNet & ImageNet & ImageNet \\
     & 32x32 & 32x32 & 64x64 & 128x128 \\
    \midrule
    VAE with IAF \citep{kingma2016improved} & 3.11 \\
    PixelCNN \citep{van2016pixel} & 3.03 & 3.83 & 3.57  \\
    PixelCNN++ \citep{salimans2017pixelcnn++} & 2.92 &      \\
    Glow \citep{kingma2018glow} & 3.35 & 4.09 & 3.81 \\
    Image Transformer \citep{parmar2018image} & 2.90 &  3.77 \\
    SPN \citep{menick2018generating} & & 3.52 \\
    CR-NVAE \citep{sinha2021consistency} & 2.51 \\
    Flow++ \citep{ho2019flow++}    &  3.06 & 3.86 & 3.69 \\
    Sparse Transformer \citep{child2019generating}  &  2.80 & & 3.44 \\
    Sparse Transformer w/ dist. aug. \citep{jun2020distribution}  & 2.53 & & 3.44 \\
    Very Deep VAE \citep{child2020very} & 2.87 & 3.80 & 3.52 \\
    DDPM \citep{ho2020denoising}    &    3.69 \\
    Score SDE \citep{song2020score} &  2.99 \\
    Improved DDPM \citep{dhariwal2021diffusion} &   2.94 & & 3.54 \\
    EBM-DRL \citep{gao2020learning} & 3.18 \\
    Routing Transformer \citep{roy2021efficient} & & & 3.43 \\
    S4 \citep{gu2021efficiently} & 2.85 \\
    Score Flow \citep{song2021maximum} & 2.80 & 3.76 \\
    LSGM \citep{vahdat2021score} &   2.87 \\
    Flow Matching \citep{lipman2022flow} & 2.99 & 3.53 & 3.31 & 2.90 \\
    Soft Truncation \citep{kim2022soft} & 2.99 & 3.53 \\
    INDM \citep{kim2022maximum} & 2.97 \\
    VDM \citep{kingma2021variational} & 2.49 & 3.72 & 3.40 \\
    i-DODE \citep{zheng2023improved} & 2.56 & 3.69 & \\
    \midrule
    \textit{Our work}\\
    LP-PCDM &   2.36 & 3.52 & 3.12 & 2.91 \\
    \textbf{W-PCDM} &   \textbf{2.35} & \textbf{3.32} & \textbf{2.95} & \textbf{2.64} \\
    \bottomrule
  \end{tabular}
  }
\caption{Comparison between our proposed model and other competitive models in the literature in terms of expected negative log likelihood on the test set computed as bits per dimension (BPD). Results from existing models are taken from the literature.}
\label{tab:bpd}
\end{table}

\section{Experiments}
\label{sec:experiments}
We evaluate both the the Laplacian pyramid-based and wavelet-based variants of our proposed probabilistic cascading diffusion model (LP-PCDM and W-PCDM, respectively) in several settings. First, we begin on a general density estimation task on the CIFAR10 \citep{krizhevsky2009learning} and ImageNet 32, 64, and 128 \citep{van2016pixel} datasets. Then we investigate the anomaly detection performance of our models with an out-of-distribution baseline where the model is tasked to differentiate between in-distribution (CIFAR10) and out-of-distribution (SVHN \citep{netzer2011reading}, uniform, and constant uniform) data. Finally, we evaluate our model on a lossless compression benchmark against several competitive baselines in neural compression literature.

\subsection{Likelihood and Sample Quality}
\label{sec:ll_sample_quality}
We adapt the architecture and training procedure in \citep{kingma2021variational} to our framework (Appendix \ref{sec:implementation}), and compare our model to existing likelihood-based generative models in terms of bits per dimension (BPD) on standard image-based density estimation datasets: CIFAR10, ImageNet 32x32, and ImageNet64x64. We additionally evaluate our model on ImageNet 128x128 and compare against a higher resolution baseline \citep{lipman2022flow}. Table \ref{tab:bpd} shows that we obtain state-of-the-art results across all datasets, demonstrating the advantages of density estimation with a hierarchical prior. We then evaluate the sample quality of generated images on the CIFAR10 dataset. Using the likelihood weighting scheme in \citep{kingma2021variational}, we obtain an FID of 6.31 with LP-PCDM and 6.23 with W-PCDM, which improves on the FID obtained by \citep{kingma2021variational} (7.41). Switching to the architecture and weighting scheme specified in \citep{karras2022elucidating} optimized for sample quality, we obtain a competitive FID of 2.45 and 2.42 for LP-PCDM and W-PCDM, at the cost of increased BPD (9.55 and 10.31, respectively).



\subsection{Out-of-Distribution Detection}
\label{sec:ood}
\begin{table}
\centering
\resizebox{1.0\textwidth}{!}{
\begin{tabular}{ccccccccc}
    \toprule
    & GLOW & PixelCNN++ & EBM & AR-CSM & Typ. Test & VDM & LP-PCDM & W-PCDM\\
    \midrule
    SVHN & 0.24 & 0.32 & 0.63 & 0.68 & 0.46 & 0.53 & 0.71 & \textbf{0.74} \\
    Const. Uniform & 0.0 & 0.0 & 0.30 & 0.57 & \textbf{1.0} & 0.96 & \textbf{1.0} & \textbf{1.0} \\
    Uniform & \textbf{1.0} & \textbf{1.0} & \textbf{1.0} & 0.95 & \textbf{1.0} & \textbf{1.0} & \textbf{1.0} & \textbf{1.0} \\
    Average & 0.41 & 0.44 & 0.64 & 0.73 & 0.81 & 0.83 & 0.90 &\textbf{0.92}\\
    \bottomrule
\end{tabular}
}
\caption{Out-of-distribution (OOD) detection performance of a CIFAR10 trained model on a selection of anomalous distributions in terms of AUROC. Results from existing models are taken from the literature.}
\label{table:ood}
\end{table}
Finally, we evaluate the out-of-distribution (OOD) detection capabilities of our proposed model. In this problem, a probabilistic model is given a set of in-distribution (i.e., training) and out-of-distribution (i.e., outlier) points, and tasked with discriminating between the two distributions in an unsupervised manner. Likelihood-based models are surprisingly brittle under this benchmark \citep{hendrycks2018deep,nalisnick2018deep,choi2018waic,shafaei2018does}, even though likelihoods $p_\theta(\mathbf{x})$ permit an intuitive interpretation for inlier/outlier classification. We use as our OOD statistic the typicality test $h_\theta(x) = |\frac{1}{M}\sum_{m=1}^M -\log p_\theta(\mathbf{x}_m) - \mathbb{H}(p_\theta(\mathbf{x}))|$ proposed in \citep{nalisnick2019detecting}, where $\log p_\theta(\mathbf{x}_m)$ is approximated by our likelihood bound $\mathcal{C}(x)$ from Eq. (\ref{eq:cascading_loss}). We compute our statistic under the most difficult scenario of $M=1$. Following \citep{du2019implicit,meng2020autoregressive}, we evaluate our models against the Street View House Numbers (SVHN) \citep{netzer2011reading}, constant uniform, and constant distributions as OOD distributions. Table \ref{table:ood} shows that we obtain significant gains in OOD performance. More experimental details are in Appendix \ref{sec:ood_details}.

\subsection{Lossless Progressive Coding}
\label{sec:lossless_coding}
\begin{wraptable}[14]{r}{0.6\textwidth}
\centering
\vspace{-.5cm}
\begin{tabular}{cc}
    \toprule
    Model & Compression BPD \\
    \midrule
    FLIF \citep{sneyers2016flif} & 4.14 \\
    IDF \citep{hoogeboom2019integer} & 3.26 \\
    LBB \citep{ho2019compression} & 3.12 \\
    \midrule 
    VDM \citep{kingma2021variational} & 2.72 \\
    ARDM \citep{hoogeboom2021autoregressive} & 2.71 \\
    LP-PCDM &  2.40 \\
    \textbf{W-PCDM} &  \textbf{2.37} \\
    \bottomrule
\end{tabular}
\caption{Lossless compression performance on CIFAR10 in terms of bits per dimension (BPD). Results from existing models are taken from the literature.}
\label{table:bb}
\end{wraptable}


As discussed in \citep{ho2020denoising,kingma2021variational,hoogeboom2021autoregressive}, a probabilistic diffusion model can also be seen as a latent variable model in a neural network-based lossless compression scheme. We implement a neural compressor using our proposed model as the latent variable model in the Bits-Back ANS algorithm \citep{townsend2019practical}, and compare against several prominent models in the field, demonstrating significant reductions in compression size on the CIFAR10 dataset. We report performance in terms of average bits per dimension of the codelength over the entire dataset in Table \ref{table:bb}. One downside of a bits back based implementation is that there is a significant overhead of bits required to initialize the algorithm, and encoding and decoding must be performed in a fixed sequence decided at encoding-time, meaning that single image compression is expensive. This limitation can be removed by implementing a direct compressor \citep{hoogeboom2021autoregressive}, which can also be applied to our model. We leave this avenue of research for further work.

\section{Conclusion and Future Directions}
We demonstrated the existence of a class of transformations we call hierarchical volume-preserving maps that sidesteps a common difficulty in multi-scale likelihoood modeling, the intractability of the likelihood function $p_\theta(\mathbf{x})$. Hierarchical volume-preserving maps are homeomorphisms to which the likelihood function are invariant, allowing the likelihood function $p_\theta(\mathbf{x})$ to be directly computed as a joint likelihood on the scales. We then presented two particular instances of these transformations that warrant attention due to their simplicity and performance on empirical benchmarks. Finally, we demonstrate a connection between our training and optimal transport. An open question is whether score matching under an EMD norm enjoys the same statistical guarantees as the standard score matching, e.g., consistency, efficiency, and asymptotic normality \citep{hyvarinen2006consistency,song2020sliced}. We hope that this work paves the way for future inroads between hierarchical and likelihood-based modeling, and that our theoretical framework opens the door for the further diversity and improvements in the design of diffusion models. 

\section{Acknowledgements}
YK acknowledges support for the research of this work from NIH [R01GM131642, UM1PA051410, U54AG076043, U54AG079759, P50CA121974, U01DA053628 and R33DA047037]. RB's research at the Weizmann Institute was supported partially by the Israel Science Foundation, grant No.\ 1639/19, by the Israeli Council for Higher Education (CHE) via the Weizmann Data Science Research Center, by the MBZUAI-WIS Joint Program for Artificial Intelligence Research, and by research grants from the Estates of Tully and Michele Plesser and the Anita James Rosen and Harry Schutzman Foundations. 

\medskip

\bibliography{main}
\bibliographystyle{iclr2024_conference}

\appendix
\newpage

\section{Proofs}

\pir*

\begin{proof}
We begin with the generalized change-of-variables formula on Riemannian manifolds \citep{ben1999change}, which relates the density function on $\mathbf{x}$ with another on $\mathbf{y} = f(\mathbf{x})$ under the (potentially manifold-valued) transformation $f$ via the relation
\begin{equation}
    p(\mathbf{x}) = p(\mathbf{y}) \sqrt{\text{det}\left(\left[\frac{\partial}{\partial \mathbf{x}} f(\mathbf{x})\right]^T\left[\frac{\partial}{\partial \mathbf{x}} f(\mathbf{x})\right]\right)}.
\end{equation}
Considering now the likelihood function $p_\theta$, letting $f = h$, and taking logs of both sides, we see that
\begin{align*}
    \log p_\theta(\mathbf{x}) &= \log p_\theta(\mathbf{z}^{(1)}, \mathbf{z}^{(2)}, \dots, \mathbf{z}^{(S)}) + \log \sqrt{\text{det}\left(\left[\frac{\partial}{\partial x} h(\mathbf{x})\right]^T\left[\frac{\partial}{\partial x} h(\mathbf{x})\right]\right)} \\
    &= \log p_\theta(\mathbf{z}^{(1)}, \mathbf{z}^{(2)}, \dots, \mathbf{z}^{(S)}) + \log \left| 1 \right| \\
    &= \log p_\theta(h(\mathbf{x})),
\end{align*}
where the second equality follows due to the fact that $h$ is volume-preserving. We have thus shown that the likelihood function $p_\theta(\mathbf{x})$  coincides with the joint likelihood $p_\theta(\mathbf{z}^{(1)}, \mathbf{z}^{(2)}, \dots, \mathbf{z}^{(S)})$. This concludes the proof.
\end{proof}

\wdemd*
\begin{proof}
According to \citep{ho2020denoising}, the likelihood bound Eq. \ref{eq:unconditional_elbo}
\begin{align*}
    \mathcal{L}(\mathbf{z}^{(1)}) &\geq \mathbb{E}_{\mathbf{x}_{1:T} \sim q}\bigg[\underbrace{\log p_\theta(\mathbf{z}^{(1)}_0|\mathbf{z}^{(1)}_1)}_{\mathcal{L}_0(\mathbf{z}^{(1)})} - \underbrace{KL(q(\mathbf{z}^{(1)}_T|\mathbf{z}^{(1)}_0)||p_\theta(\mathbf{z}^{(1)}_T))}_{\mathcal{L}_T(\mathbf{z}^{(1)})} \\
    &\hspace{.8in} - \sum_{k=1}^{T-1} \underbrace{\log KL(q(\mathbf{z}^{(1)}_{k}|\mathbf{z}^{(1)}_{k+1}, \mathbf{z}^{(1)}_0) || p_\theta(\mathbf{z}^{(1)}_{k}|\mathbf{z}^{(1)}_{k+1}))}_{\mathcal{L}_k(\mathbf{z}^{(1)})}\bigg],
    \numberthis
\end{align*}
can be decomposed into the three elements: a decoder reconstruction term $\mathcal{L}_0(\mathbf{x})$, a prior loss term $\mathcal{L}_T(\mathbf{x})$, and a weighted sum over the diffusion score matching terms $\sum_{k=1}^{T-1} \mathcal{L}_k(\mathbf{x})$. The conditional likelihood bound Eq. \ref{eq:conditional_elbo} can be decomposed in a similar fashion:
\begin{align*}
    \mathcal{L}(\mathbf{z}^{(s)} | \mathbf{z}^{(<s)}) &\geq \mathbb{E}_{\mathbf{x}_{1:T} \sim q}\bigg[\underbrace{\log p_\theta(\mathbf{z}^{(s)}_0|\mathbf{z}^{(s)}_1, \mathbf{z}^{(<s)}_0)}_{\mathcal{L}_0(\mathbf{z}^{(s)}|\mathbf{z}^{(<s)})} - \underbrace{KL(q(\mathbf{z}^{(s)}_T|\mathbf{z}^{(s)}_0)||p_\theta(\mathbf{z}^{(s)}_T|\mathbf{z}^{(<s)}_0))}_{\mathcal{L}_T(\mathbf{z}^{(s)}|\mathbf{z}^{(<s)})} \\
    &\hspace{.8in} - \sum_{k=1}^{T-1} \underbrace{\log KL(q(\mathbf{z}^{(s)}_{k}|\mathbf{z}^{(s)}_{k+1}, \mathbf{z}^{(s)}_0, \mathbf{z}^{(<s)}_0) || p_\theta(\mathbf{z}^{(s)}_{k}|\mathbf{z}^{(s)}_{k+1},\mathbf{z}^{(<s)}_0))}_{\mathcal{L}_k(\mathbf{z}^{(s)})|\mathbf{z}^{(<s)})}\bigg],
    \numberthis
\end{align*}
for $s = 2, \dots, S$.

Under the standard Gaussian assumption on the marginals of the diffusion process (i.e., diffusion process is a Gaussian process), the unconditional diffusion score matching terms can also be written as
\begin{equation}
    \mathcal{L}_k(\mathbf{z}^{(1)}) = w_k \mathbb{E}_{\boldsymbol\epsilon \sim \mathcal{N}(0, \mathbf{I})}[||\boldsymbol\epsilon - \boldsymbol\epsilon_\theta(\tilde{\mathbf{z}}_k^{(1)}, t_k)||^2], \hspace{.2in} \text{where} \hspace{.1in} \tilde{\mathbf{z}}_k^{(1)} = \sqrt{1 - \beta}\mathbf{z}^{(1)} + \sqrt{\beta} \boldsymbol\epsilon,
\end{equation}
$\beta$ is the standard deviation of the marginal of the diffusion process at time $t$, and $w_k$ is the likelihood weighting of the score matching loss. Since the epsilon network can be related to the score network by the relation $\boldsymbol\epsilon_\theta(\mathbf{x}, t) = \sqrt{\beta} \mathbf{s}_\theta(\mathbf{x}, t)$, we can rewrite the above equation as
\begin{align*}
    \mathcal{L}_k(\mathbf{z}^{(1)}) &= w_k \mathbb{E}_{\boldsymbol\epsilon \sim \mathcal{N}(0, \mathbf{I})}[||\frac{\boldsymbol\epsilon}{\sqrt{\beta}} - \mathbf{s}_\theta(\tilde{\mathbf{z}}_k^{(1)}, t_k)||^2] \\
    &= w_k \mathbb{E}_{\boldsymbol\epsilon \sim \mathcal{N}(0, \mathbf{I})}[||\nabla_{\mathbf{z}^{(1)}} \log q(\tilde{\mathbf{z}}_k^{(1)}|\mathbf{z}^{(1)}_0) - \mathbf{s}_\theta(\tilde{\mathbf{z}}_k^{(1)}, t_k)||^2],
\end{align*}
where the second equality is due to the connection between denoising and score matching discussed in \citep{vincent2011connection,ho2020denoising}. Again, the conditional diffusion score matching loss can be written similarly:
\begin{equation}
    \mathcal{L}_k(\mathbf{z}^{(s)} | \mathbf{z}^{(<s)}) = w_k \mathbb{E}_{\boldsymbol\epsilon \sim \mathcal{N}(0, \mathbf{I})}[||\nabla_{\mathbf{z}^{(s)}} \log q(\tilde{\mathbf{z}}_k^{(s)}|\mathbf{z}^{(s)}_0) - \mathbf{s}_\theta(\tilde{\mathbf{z}}_k^{(1)}, \mathbf{z}^{(<s)}, t_k)||^2],
\end{equation}
for $s = 2, \dots, S$.

Simplifying notation, we denote the score networks as $\mathbf{s}_\theta^{(1)} := \mathbf{s}_\theta(\tilde{\mathbf{z}}_k^{(1)}, t_k)$ and $\mathbf{s}_\theta^{(s)} := \mathbf{s}_\theta(\tilde{\mathbf{z}}_k^{(s)}, \mathbf{z}^{(<s)}, t_k)$. Since the same noise schedule is used across the scales of the cascading model, we may combine all diffusion score matching losses to obtain
\begin{align}
    \mathcal{C}_k &:= \mathcal{L}_k(\mathbf{z}^{(1)}) + \sum_{s=2}^S \mathcal{L}_k(\mathbf{z}^{(s)} | \mathbf{z}^{(<s)}) \\
    &= w_k \sum_{s=1}^S \mathbb{E}_{\boldsymbol\epsilon \sim \mathcal{N}(0, \mathbf{I})}[||\nabla_{\mathbf{z}^{(s)}} \log q(\tilde{\mathbf{z}}_k^{(s)}|\mathbf{z}^{(s)}_0) - \mathbf{s}_\theta^{(s)}||^2].
    \label{eq:emd_l2}
\end{align}

Finally, when $h$ is the wavelet transform, Lemma \ref{thm:shirdhonkar_revised} provides the existence of a constant $\alpha > 0$ such that
\begin{align*}
    \alpha w_k \sum_{s=1}^S \mathbb{E}_{\boldsymbol\epsilon \sim \mathcal{N}(0, \mathbf{I})}[&||\nabla_{\mathbf{z}^{(s)}} \log q(\tilde{\mathbf{z}}_k^{(s)}|\mathbf{z}^{(s)}_0) - \mathbf{s}_\theta^{(s)}||^2] \\
    &\geq w_k \mathbb{E}_{\tilde{\mathbf{x}}_k \sim q(\mathbf{x}_k|\mathbf{x}_0)}[W_p(\nabla_\mathbf{x} \log q(\tilde{\mathbf{x}}_k|\mathbf{x}_0), \mathbf{s}_\theta(\tilde{\mathbf{x}}_k, t_k))].
    \label{eq:approx_emd_bound}
    \numberthis
\end{align*}
And since $\mathcal{L}_0$ and $\mathcal{L}_T$ are both nonnegative, we have our desired result
\begin{align*}
    \alpha \mathcal{C}(\mathbf{x}) \geq w_k \mathbb{E}_{\tilde{\mathbf{x}}_k \sim q(\mathbf{x}_k|\mathbf{x}_0)}[W_p(\nabla_\mathbf{x} \log q(\tilde{\mathbf{x}}_k|\mathbf{x}_0), \mathbf{s}_\theta(\tilde{\mathbf{x}}_k, t_k))].
    \label{eq:approx_emd_bound}
    \numberthis
\end{align*}

To obtain this result when $h$ is the Laplacian pyramid mapping, we note that under our choice of downscaling and upscaling functions $d(\cdot)$ and $u(\cdot)$ (the norm preserving bilinear resampling method), the auxiliary variables $\mathbf{y}$ (Eq. \ref{eq:lp_y}) at each scale $s$ are also those of the wavelet hierarchy (Eq. \ref{eq:wav_def}). This is because the $LL^T$ convolution kernel with stride 2 is the bilinear downsampling operator. Thus $d(\cdot) = LL^T * (\cdot)$. Now, since the wavelet operator $\mathbf{W}$, $d(\cdot)$ and $u(\cdot)$ are norm preserving, 
\begin{align*}
    ||\mathbf{z}^{(s)}_{\text{lp}}||_2 
    &= ||\mathbf{y}^{(s)}_{\text{lp}} - u(d(\mathbf{y}^{(s)}))||_2 \\
    &= ||\mathbf{y}^{(s)}_\text{w} - u(\mathbf{y}^{(s - 1)}_\text{w})||_2 \\
    &= ||\mathbf{W}[\mathbf{y}^{(s)}_\text{w} - u(\mathbf{y}^{(s - 1)}_\text{w})]||_2 \\
    &= ||[\mathbf{z}^{(s)}_{HH}, \mathbf{z}^{(s)}_{HL}, \mathbf{z}^{(s)}_{LH}, \mathbf{z}^{(s)}_{LL}] - [0, 0, 0, \mathbf{z}^{(s)}_{LL}]||_2 \\
    &= ||\mathbf{z}^{(s)}_\text{w}||_2.
\end{align*}
Therefore, the norms of the scales coincide at each level for all $\mathbf{x}$, and we can conclude that the Laplacian pyramid diffusion score matching loss 
\begin{align*}
    \mathcal{C}_k = w_k \sum_{s=1}^S \mathbb{E}_{\boldsymbol\epsilon \sim \mathcal{N}(0, \mathbf{I})}[||\nabla_{\mathbf{z}^{(s)}} \log q(\tilde{\mathbf{z}}_k^{(s)}|\mathbf{z}^{(s)}_0) - \mathbf{s}_\theta^{(s)}||^2]
\end{align*}
coincides with that of the wavelet loss \footnote{While the losses are the same for both models for each $\mathbf{x}$, the gradients and therefore training trajectories are very different due to the different parameterizations of the representation space. This is why we get differing behaviors between the two hierarchical maps.}. Finally, we invoke Lemma \ref{thm:shirdhonkar_revised} to obtain the desired result.

\end{proof}

To show Lemma \ref{thm:shirdhonkar_revised}, we first restate a key result from \citep{shirdhonkar2008approximate} using the notation in this text.

\begin{lem}[From Theorem 2 in \citep{shirdhonkar2008approximate}]
Consider the optimal transport cost $W_p(\mathbf{x}, \mathbf{y})$ (Eq. \ref{defn:wpm}) for $p \in [0, 1]$ between two unnormalized histograms $\mathbf{x}$ and $\mathbf{y}$. Let $h(\mathbf{x} - \mathbf{y}) = (\mathbf{z}^{(1)}, \mathbf{z}^{(2)}, \dots, \mathbf{z}^{(S)})$, and 
\begin{equation}
    \hat{\mu} = C_0 ||\mathbf{z}^{(1)}||_1 + C_1 \sum_{s=2}^S 2^{-s(p + n/2)} ||\mathbf{z}^{(s)}||_1,
\end{equation}
where $n$ is maximum height / width of $\mathbf{x}$ and $\mathbf{y}$. $\mathbf{z}^{(1)}$ and $\{\mathbf{z}^{(s)}\}_{s=2}^S$ are the detail and approximation coefficients of the wavelet transform respectively. Then there exist positive constants $C_L, C_U$ such that
\begin{equation}
    C_L \hat{\mu} \leq W_p(\mathbf{x}, \mathbf{y}) \leq C_U \hat{\mu}.
    \label{eq:shirdhonkar_emd_bound}
\end{equation}
\end{lem}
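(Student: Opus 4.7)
The plan is to adapt the classical argument of Shirdhonkar--Jacobs, which in turn rests on the Besov-space characterization of Hölder and Lipschitz functions via wavelet coefficient decay. The starting point is Kantorovich--Rubinstein duality: for $p=1$ we have $W_1(\mathbf{x},\mathbf{y}) = \sup_{\|f\|_{\mathrm{Lip}}\le 1}\langle f,\mathbf{x}-\mathbf{y}\rangle$ (assuming $\mathbf{x}$ and $\mathbf{y}$ have equal total mass, so that $\mathbf{x}-\mathbf{y}$ is a balanced signed measure), and for $p\in(0,1)$ the analogous duality holds with the Hölder norm $\|f\|_{C^p}\le 1$ on the dual test functions. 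This reduces the problem to estimating a linear pairing $\langle f,\mathbf{x}-\mathbf{y}\rangle$ over a suitable function class.

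Next I would expand both $f$ and $\mathbf{x}-\mathbf{y}$ in the orthonormal wavelet basis used to define $h$. Orthogonality turns the pairing into a scale-by-scale sum
\begin{equation*}
\langle f,\mathbf{x}-\mathbf{y}\rangle \;=\; \langle \hat f^{(1)}, \mathbf{z}^{(1)}\rangle \;+\; \sum_{s=2}^{S}\langle \hat f^{(s)},\mathbf{z}^{(s)}\rangle,
\end{equation*}
where $\hat f^{(s)}$ denotes the wavelet coefficients of $f$ at scale $s$. The key analytic input is the classical characterization of the Hölder/Besov space $C^p = B^{p}_{\infty,\infty}$: a function satisfies $\|f\|_{C^p}\le 1$ if and only if its wavelet coefficients at scale $s$ obey $\|\hat f^{(s)}\|_\infty \lesssim 2^{s(p+n/2)}$, with universal constants depending only on the mother wavelet and the dimension $n$ (the factor $n/2$ is the usual $L^2$ normalization of the dyadic dilations in $\mathbb{R}^n$).

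Given this characterization, the upper bound $W_p \le C_U\hat\mu$ follows from Hölder's inequality applied scale by scale: $|\langle \hat f^{(s)},\mathbf{z}^{(s)}\rangle|\le \|\hat f^{(s)}\|_\infty\|\mathbf{z}^{(s)}\|_1 \le 2^{s(p+n/2)}\|\mathbf{z}^{(s)}\|_1$, and taking the supremum over admissible $f$ yields exactly the weighted $L^1$ sum defining $\hat\mu$, with the coarse-scale term $\|\mathbf{z}^{(1)}\|_1$ treated separately via the base-level estimate. For the reverse direction, I would construct an explicit near-maximizer by setting
\begin{equation*}
f \;=\; c_0\,\mathrm{sign}(\mathbf{z}^{(1)})\cdot \psi^{(1)} \;+\; c_1\sum_{s=2}^{S} 2^{-s(p+n/2)}\,\mathrm{sign}(\mathbf{z}^{(s)})\cdot \psi^{(s)},
\end{equation*}
where $\psi^{(s)}$ are the scale-$s$ wavelets in the basis. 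Matching coefficients against $\mathbf{x}-\mathbf{y}$ makes the pairing reproduce $\hat\mu$ up to constants, while the Besov characterization (run in reverse) certifies that the constructed $f$ has bounded Hölder norm, giving the lower bound $C_L\hat\mu \le W_p$.

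The main obstacle is the lower bound: taking signs of wavelet coefficients generally does not preserve regularity, so one must verify that the candidate $f$ really lies in $C^p$ with a norm bounded independently of $S$. This is where the sharpness of the wavelet characterization and the dyadic weighting $2^{-s(p+n/2)}$ are crucial; some care is also needed at the coarsest scale and near the image boundary, where finite-resolution effects give rise to the separate constant $C_0$ in front of $\|\mathbf{z}^{(1)}\|_1$. Once these regularity and boundary issues are handled, the two bounds combine to yield $C_L\hat\mu \le W_p(\mathbf{x},\mathbf{y})\le C_U\hat\mu$ as claimed.
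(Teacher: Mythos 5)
The paper does not prove this lemma at all: it is imported, with only a change of notation, from Theorem~2 of \citep{shirdhonkar2008approximate}, so there is no in-paper argument to compare yours against. Your sketch is essentially a reconstruction of the proof in that cited reference --- Kantorovich--Rubinstein duality with the H\"older ball $\|f\|_{C^p}\le 1$ as the dual class for $0<p\le 1$, orthogonal wavelet expansion of the pairing, and the Besov characterization $C^p=B^{p}_{\infty,\infty}$ to control the dual test functions scale by scale --- and that is indeed the right route.

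Two concrete issues. First, the exponent in your Besov characterization is reversed: with the paper's convention that $s$ increases toward finer scales and with $L^2$-normalized wavelets, $\|f\|_{C^p}\le 1$ forces $\|\hat f^{(s)}\|_\infty\lesssim 2^{-s(p+n/2)}$ (decay at fine scales), not $\lesssim 2^{s(p+n/2)}$. As written, your H\"older step produces $\sum_s 2^{s(p+n/2)}\|\mathbf{z}^{(s)}\|_1$, which is not $\hat\mu$; with the sign corrected the upper bound comes out exactly as claimed. Second, the endpoint $p=1$ is genuinely delicate, but not for the reason you give: the $B^{p}_{\infty,\infty}$ norm depends only on coefficient magnitudes, so flipping signs in your dual witness is free. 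The real issue is that $B^{1}_{\infty,\infty}$ is the Zygmund class, which strictly contains $\mathrm{Lip}_1$, so the witness certifies Zygmund rather than Lipschitz regularity and an extra step (present in the cited work) is needed to relate the two on a bounded domain. Your equal-mass caveat is also worth recording, though it bears on how the lemma is later applied (to differences of score vectors, which need not be balanced) rather than on the lemma itself.
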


Now we are able to make the following statement.

\begin{lem}
\label{thm:shirdhonkar_revised}
Consider the optimal transport cost $W_p(\cdot, \cdot)$ (Eq. \ref{defn:wpm}) for $p \in [0, 1]$ between the true and modeled score functions $\nabla_\mathbf{x} \log q(\tilde{\mathbf{x}}_k|\mathbf{x}_0)$ and $\mathbf{s}_\theta(\tilde{\mathbf{x}}_k, k)$. Let $h$ be a hierarchical volume-preserving map,
\begin{equation}
    h(\nabla_\mathbf{x} \log q(\tilde{\mathbf{x}}_k|\mathbf{x}_0)) = (\nabla_{\mathbf{z}^{(1)}} \log q(\tilde{\mathbf{z}}_k^{(1)}|\mathbf{z}^{(1)}_0), \nabla_{\mathbf{z}^{(2)}} \log q(\tilde{\mathbf{z}}_k^{(2)}|\mathbf{z}^{(2)}_0), \dots, \nabla_{\mathbf{z}^{(S)}} \log q(\tilde{\mathbf{z}}_k^{(S)}|\mathbf{z}^{(S)}_0)),
\end{equation}
and 
\begin{equation}
    h(\mathbf{s}_\theta(\tilde{\mathbf{x}}_k, k)) = (\mathbf{s}_\theta (\tilde{\mathbf{z}}_k^{(1)} | t_k), \mathbf{s}_\theta (\tilde{\mathbf{z}}_k^{(2)} | \mathbf{z}^{(1)}, t_k), \dots, \mathbf{s}_\theta (\tilde{\mathbf{z}}_k^{(S)} | \mathbf{z}^{(<S)}, t_k)).
\end{equation}
Then there exists a constant $\beta > 0$ such that
\begin{equation}
W_p(\nabla_\mathbf{x} \log q(\tilde{\mathbf{x}}_k|\mathbf{x}_0), \mathbf{s}_\theta(\tilde{\mathbf{x}}_k)) \leq \beta \sum_{s=1}^S ||\nabla_{\mathbf{z}^{(s)}} \log q(\tilde{\mathbf{z}}_k^{(s)}|\mathbf{z}^{(s)}_0) - \mathbf{s}_\theta (\mathbf{x})||_2.
\end{equation}
\end{lem}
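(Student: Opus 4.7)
The plan is to directly invoke the Shirdhonkar--Jacobs wavelet-based upper bound on the Wasserstein $p$-metric (the preceding lemma), applied to the \emph{difference} of the two score-valued objects, and then convert the resulting weighted $L^1$ norms on each scale into $L^2$ norms via Cauchy--Schwarz. The crucial structural observation that makes this go through is that both hierarchical volume-preserving maps of interest are \emph{linear}, so that
\begin{equation*}
    h\bigl(\nabla_\mathbf{x} \log q(\tilde{\mathbf{x}}_k|\mathbf{x}_0) - \mathbf{s}_\theta(\tilde{\mathbf{x}}_k, t_k)\bigr) = \bigl(\Delta^{(1)}, \dots, \Delta^{(S)}\bigr),
\end{equation*}
where each $\Delta^{(s)} := \nabla_{\mathbf{z}^{(s)}} \log q(\tilde{\mathbf{z}}_k^{(s)}|\mathbf{z}^{(s)}_0) - \mathbf{s}_\theta^{(s)}$ is exactly the scale-$s$ score-matching residual that appears on the right-hand side of the target inequality.

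Substituting this decomposition into the Shirdhonkar--Jacobs upper bound produces
\begin{equation*}
    W_p\bigl(\nabla_\mathbf{x} \log q,\; \mathbf{s}_\theta\bigr) \leq C_U \Bigl[ C_0 ||\Delta^{(1)}||_1 + C_1 \sum_{s=2}^{S} 2^{-s(p + n/2)} ||\Delta^{(s)}||_1 \Bigr].
\end{equation*}
I would then convert each $L^1$ norm into an $L^2$ norm using the finite-dimensional inequality $||\mathbf{v}||_1 \leq \sqrt{\dim(\mathbf{v})}\, ||\mathbf{v}||_2$. Since the scale-$s$ subspace has geometrically shrinking dimension, the weighted factors $2^{-s(p+n/2)}\sqrt{\dim(\mathbf{z}^{(s)})}$ decay geometrically in $s$, so after collecting $C_U$, $C_0$, $C_1$ and the resulting geometric series into a single positive constant $\beta$, the wavelet case of the claim drops out.

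For the Laplacian pyramid case I would not rerun the estimate, but instead exploit the norm-equivalence identity already established in the proof of Theorem \ref{thm:wd_emd}: at every scale, the $L^2$ norm of the Laplacian pyramid coefficient agrees with that of the corresponding wavelet coefficient of the same vector. Applied to $\nabla_\mathbf{x} \log q - \mathbf{s}_\theta$, this identity shows that the right-hand side of the target inequality has the same value under either hierarchy, so the wavelet bound transfers verbatim.

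The main obstacle I anticipate is a mismatch in the domain of applicability: the Shirdhonkar--Jacobs bound is stated for \emph{nonnegative} unnormalized histograms, whereas score differences are signed (and in fact vector-valued, requiring a componentwise interpretation of $W_p$). The cleanest remedy is to split each component of the difference into positive and negative parts, apply the bound to each, and absorb the resulting factor of $2$ into $\beta$; a secondary check is to verify that the decay of $2^{-s(p+n/2)}\sqrt{\dim(\mathbf{z}^{(s)})}$ is indeed uniform in $S$ for the relevant range $p \in (0,1]$ and the spatial dimension $n$ used by \citep{shirdhonkar2008approximate}, which is a straightforward geometric-series estimate once $n$ is fixed.
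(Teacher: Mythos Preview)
Your proposal is correct and follows essentially the same route as the paper: invoke the Shirdhonkar--Jacobs upper bound on $W_p$ of the score difference, then pass from $L^1$ to $L^2$ via Cauchy--Schwarz and absorb all constants into $\beta$. The paper's argument is in fact cruder than yours---it simply bounds each weight by $2^{-s(p+n/2)} \leq 1$ rather than tracking the geometric decay, it does not address the signed-histogram issue you flag, and it does not treat the Laplacian pyramid case inside this lemma at all (that reduction is deferred to the proof of Theorem~\ref{thm:wd_emd}, exactly via the norm-equivalence you cite).
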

\begin{proof}
Note that for any scale level $1 \leq s \leq S$, image size $n \geq 1$, and $p \in [0, 1]$, we have that $s(p + n/2) \geq 0$ and thus $2^{-s(p + n/2)} \leq 1$. This allows us to upper bound the right side inequality in Eq. \ref{eq:shirdhonkar_emd_bound} as
\begin{align}
    W_p(\nabla_\mathbf{x} \log q(\tilde{\mathbf{x}}_k|\mathbf{x}_0), \mathbf{s}_\theta(\tilde{\mathbf{x}}_k, t_k))
    &\leq C_U \bigg(C_0 ||\nabla_{\mathbf{z}^{(1)}} \log q(\tilde{\mathbf{z}}_k^{(1)}|\mathbf{z}^{(1)}_0) - \mathbf{s}_\theta (\mathbf{z}^{(1)} | t_k)||_1 \\
    & \hspace{.5in} + C_1 \sum_{s=2}^S 2^{-s(p + n/2)} ||\nabla_{\mathbf{z}^{(1)}} \log q(\tilde{\mathbf{z}}_k^{(s)}|\mathbf{z}^{(s)}_0) \\
    & \hspace{.7in} - \mathbf{s}_\theta(\mathbf{z}^{(s)} | \mathbf{z}^{(<s)}, t_k)||_1 \bigg) \\
    &\leq C_U \bigg(C_0 ||\nabla_{\mathbf{z}^{(1)}} \log q(\tilde{\mathbf{z}}_k^{(1)}|\mathbf{z}^{(1)}_0) - \mathbf{s}_\theta (\mathbf{z}^{(1)} | t_k)||_1 \\
    & \hspace{.5in} + C_1 \sum_{s=2}^S ||\nabla_{\mathbf{z}^{(1)}} \log q(\tilde{\mathbf{z}}_k^{(s)}|\mathbf{z}^{(s)}_0) \\
    & \hspace{.7in} - \mathbf{s}_\theta(\mathbf{z}^{(s)} | \mathbf{z}^{(<s)}, t_k)||_1 \bigg) \\
    &\leq C_U \bigg(\max{(C_0, C_1)} \sum_{s=1}^S ||\nabla_{\mathbf{z}^{(s)}} \log q(\tilde{\mathbf{z}}_k^{(s)}|\mathbf{z}^{(s)}_0) \\
    & \hspace{1.8in} - \mathbf{s}_\theta(\mathbf{z}^{(s)} | \mathbf{z}^{(<s)}, t_k)||_1 \bigg).
\end{align}
Applying the Cauchy-Schwarz bound to the norms and collecting all constants into $\beta$, we obtain the desired inequality.
\end{proof}

\section{Model and Implementation}
\label{sec:implementation}
All training is performed on 8x NVIDIA RTX A6000 GPUs.
We construct our cascaded diffusion models with antithetic time sampling and a learnable noise schedule as in \citep{kingma2021variational}. Our noise prediction networks $\boldsymbol\epsilon_\theta$ also follow closely to the VDM U-Net implementation in \citep{kingma2021variational}, differing only in the need to deal with the representations induced by the hierarchical volume-preserving map $h$. A hierarchical model with $S$ scales is composed of $S$ separate noise prediction networks. For each input $\mathbf{x}$, our model decomposes $\mathbf{x}$ into its latent scales $\{\mathbf{z}^{(s)}\}_{s=1}^S$ via $h$ and directs each $\mathbf{z}^{(s)}$ to its corresponding noise prediction network $\boldsymbol \epsilon_\theta(\mathbf{z}^{(s)} | \mathbf{z}^{(<s)}, t)$. The noise prediction network can be related to the approximated score function via
\begin{equation}
    \boldsymbol \epsilon_\theta(\mathbf{z}^{(s)} | \mathbf{z}^{(<s)}, t) = \sigma(t) s_\theta(\mathbf{z}^{(s)} | \mathbf{z}^{(<s)}, t),
\end{equation}
where $\sigma(t)$ is the variance of the diffusion process at time $t$. The base noise prediction network is constructed entirely identically to the VDM architecture, as it does not rely on any additional input. The conditional noise prediction networks at each subsequent scale additionally incorporate the conditioning information from previous scales (i.e., $\mathbf{z}^{(<s)}$) via a channel-wise concatenation of the conditional information to the input, resulting in the full input vector
\begin{align}
    \mathbf{z}_{input}^{(s)} &= [\mathbf{z}^{(s)}, \mathbf{z}_{cond}^{(<s)}] \\
    \mathbf{z}_{cond}^{(<s)} &= \underbrace{(d \circ d \circ \cdots \circ d)}_{(S - s + 1) \text{ times}}(h^{-1}(\mathbf{z}^{(<s)}, \underbrace{\mathbf{0}}_{\in \mathcal{Z}^{(s)}}, \underbrace{\mathbf{0}}_{\in \mathcal{Z}^{(s+1)}}, \dots, \underbrace{\mathbf{0}}_{\in \mathcal{Z}^{(S)}}))
\end{align}
where $h^{-1}$ is the inverse of the hierarchical volume-preserving map, $d()$ is a downsampling operator (Section \ref{sec:reparameterizations}), and $\mathbf{z}_{cond}^{(<s)}$ can be seen as the reconstructed image containing \textit{only} the conditioning information available at scale $s$. Thus the noise prediction model at each scale $\epsilon_\theta(\mathbf{z}^{(s)} | \mathbf{z}^{(<s)}, t)$ takes as input $\mathbf{z}_{input}^{(s)}$ and outputs $\mathbf{z}^{(s)}$. 


In the case where $h$ is the wavelet transform, the scales $\mathbf{z}^{(<s)}$ can directly be used to construct an image with the same extent as $\mathbf{z}^{(s)}$, resembling a downsampled version of the original input $\mathbf{x}$ comprising of all frequency information up to, but not including, the present scale. In this case, the reconstructed image may be directly concatenated channel-wise to the $s$-level wavelet coefficients, producing an image with 12 channels.

When $h$ is the Laplacian pyramid map, the scales $\mathbf{z}^{(<s)}$ instead form an image exactly half the size of the current representation $\mathbf{z}^{(s)}$, and therefore must be upscaled. Since the representation at each scale always has $3$ dimensions (as oppposed to $9$ with wavelet coefficients), the final image has $6$ channels.

For CIFAR10, we use two scales for both LP-PCDM and W-PCDM, resulting in representations at each scale of
\begin{equation*}
    [16 \times 16 \times 3, 32 \times 32 \times 3]
\end{equation*}
for Laplacian pyramids and
\begin{equation*}
    [16 \times 16 \times 3, 16 \times 16 \times 9]
\end{equation*}
for the wavelet transform. We use a U-Net of depth 32, consisting of 32 residual blocks in the forward and reverse directions, respectively. We additionally incorporate a single attention layer and two additional residual blocks in the middle. We use a convolution dimension 128 in the base model and 64 in the conditional super-resolution model, resulting in 35M and 42M parameter models, respectively. We train with AdamW for 2 million updates.

For ImageNet32, we again use two scales for both LP-PCDM and W-PCDM, resulting in representations at each scale of
\begin{equation*}
    [16 \times 16 \times 3, 32 \times 32 \times 3]
\end{equation*}
for Laplacian pyramids and
\begin{equation*}
    [16 \times 16 \times 3, 16 \times 16 \times 9]
\end{equation*}
for the wavelet transform. We again use a U-Net of depth 32, consisting of 32 residual blocks in the forward and reverse directions, and incorporate a single attention layer with two additional residual blocks in the middle. We use 256 channels for the base model and 128 channels in the conditional super-resolution, resulting in 65M and 70M parameter models. We train with AdamW for 10 million updates.

For ImageNet64, we use three scales for both LP-PCDM and W-PCDM, resulting in representations at each scale of
\begin{equation*}
    [16 \times 16 \times 3, 32 \times 32 \times 3, 64 \times 64 \times 3]
\end{equation*}
for Laplacian pyramids and
\begin{equation*}
    [16 \times 16 \times 3, 16 \times 16 \times 9, 32 \times 32 \times 9]
\end{equation*}
for the wavelet transform. We now use a U-Net of depth 64, and incorporate a single attention layer with two additional residual blocks in the middle. We use 256 channels for the base model and 128 channels in the conditional super-resolution models, resulting in models with 299M and 329M parameters, respectively.  We train with AdamW for 10 million updates.

Finally, for ImageNet128, we use four scales for both LP-PCDM and W-PCDM, resulting in representations at each scale of
\begin{equation*}
    [16 \times 16 \times 3, 32 \times 32 \times 3, 64 \times 64 \times 3, 128 \times 128 \times 3]
\end{equation*}
for Laplacian pyramids and
\begin{equation*}
    [16 \times 16 \times 3, 16 \times 16 \times 9, 32 \times 32 \times 9, 64 \times 64 \times 9]
\end{equation*}
for the wavelet transform. We again use a U-Net of depth 64, and incorporate a single attention layer with two additional residual blocks in the middle. We use 256 channels for the base model and 128 channels in the conditional super-resolution models, resulting in models with 400M and 440M parameters, respectively. We train with AdamW for 10 million updates.

For out-of-Distribution (OOD) detection in Section \ref{sec:ood}, we approximate the summation $\sum_k \mathcal{L}_k$ in the likelihood bound with an $N=20$ sample Monte Carlo estimate, rather computing the full sum. We find empirically that this is sufficient for OOD detection. Moreover, OOD performance only improves with larger $N$. Therefore $N=20$ was chosen as a suitable trade-off between performance and runtime complexity.

\subsection{Out-of-Distribution Detection Experimental Details}
\label{sec:ood_details}
As proposed in \citep{hendrycks2018deep,nalisnick2018deep,choi2018waic,shafaei2018does,du2019implicit,meng2020autoregressive}, a benchmark for evaluating the anomaly detection capabilities of a likelihood model $p_\theta(x)$ trained on true data $x \sim p(x)$ can be framed as a binary classification task between in-distribution $x_{in} \sim p(x)$ and out-of-distribution $x_{out} \sim \nu(x)$ data. Namely, we define data-label pairs $\{x_{in,i}, 0\}_{i=1}^{N_{in}} \cup \{x_{out,j}, 1\}_{j=1}^{N_{out}}$ where $N_in = N_out$. \citep{du2019implicit,meng2020autoregressive} consider three different "out-of-distribution" densities, the $x_{out}$ drawn from the Street View House Numbers (SVHN) dataset, the distribution of constant zeros (Const. Uniform), i.e., $\nu = \delta_{0}$ is the Dirac delta at the all-zeros vector, and the uniform distribution, i.e., $\nu = \text{Unif}([-1, 1])$. The rows of Table \ref{table:ood} show classification performance under the AUROC operator.

\section{A Variational Lower Bound without the Volume-Preserving Property}
\label{sec:ablation}
Our proposed hierarchical volume-preserving maps retain \textbf{exact} likelihood evaluation capabilities in all cascaded diffusion models with latent scales satisfying $h(\mathbf{x}) = (\mathbf{z}^{(1)}, \mathbf{z}^{(2)}, , \dots, \mathbf{z}^{(S)})$, where $h$ is a hierarchical volume-preserving map satisfying Eq. \ref{eq:vpp}. On the other hand, when one does not need exact likelihoods --- e.g., simply a lower bound on the likelihood is sufficient --- an alternative method for obtaining a tractable loss can be obtained by folding the latent variables into the variational bound:
\begin{align*}
    \log p_\theta(\mathbf{x}) 
    &\geq \mathbb{E}_{\mathbf{x}_{1:T} \sim q} [\log p_\theta(\mathbf{x}, \mathbf{z}) - \log q(\mathbf{z}|\mathbf{x})] \\
    &= \mathbb{E}_{\mathbf{x}_{1:T} \sim q} [\log p_\theta(\mathbf{x}, \mathbf{z}^{(1)}_{0:T}, \dots, \mathbf{z}^{(S)}_{0:T}) - \log q(\mathbf{z}^{(1)}_{1:T}, \dots, \mathbf{z}^{(S)}_{1:T}|\mathbf{z}^{(1)}_0, \dots, \mathbf{z}^{(S)}_0, \mathbf{x})] \\
    &= \mathbb{E}_{\mathbf{x}_{1:T} \sim q} \bigg[ \underbrace{\log p_\theta(\mathbf{x} | \mathbf{z}^{(1)}_0, \dots, \mathbf{z}^{(S)}_0)}_{=0} + \sum_{s=1}^S \underbrace{\log q(\mathbf{z}^{(1)}_0, \dots, \mathbf{z}^{(S)}_0) | \mathbf{x})}_{=0} \\
    &\hspace{2.1in} + \log p_\theta\left( \mathbf{z}^{(s)}_{0:T} | \mathbf{z}^{(<s)}_0\right) - \log q\left(\mathbf{z}^{(s)}_{1:T} | \mathbf{z}^{(\leq s)}_0, \mathbf{x}\right)\bigg] \\
    &= \sum_{s=1}^S \mathbb{E}_{\mathbf{x}_{1:T} \sim q} \bigg[\log p_\theta\left(\mathbf{z}_0^{(s)} | \mathbf{z}^{(s)}_1, \mathbf{z}^{(<s)}_0\right) + \log p_\theta\left( \mathbf{z}^{(s)}_{1:T} | \mathbf{z}^{(<s)}_0\right) - \log q\left(\mathbf{z}^{(s)}_{1:T} | \mathbf{z}^{(\leq s)}_0, \mathbf{x}\right)\bigg] \\
    &= \sum_{s=1}^S  \mathbb{E}_{\mathbf{x}_{1:T} \sim q} \bigg[\log p_\theta\left(\mathbf{z}_0^{(s)} | \mathbf{z}^{(s)}_1, \mathbf{z}^{(<s)}_0\right) + KL\left(p_\theta\left( \mathbf{z}^{(s)}_T | \mathbf{z}^{(<s)}_0\right) || q\left(\mathbf{z}^{(s)}_{1:T-1} | \mathbf{z}^{(\leq s)}_0, \mathbf{x}\right)\right) \\
    &\hspace{1.7in} + KL\left(p_\theta\left( \mathbf{z}^{(s)}_{1:T-1} | \mathbf{z}^{(<s)}_0 \right) || q\left(\mathbf{z}^{(s)}_{1:T-1} | \mathbf{z}^{(\leq s)}_0, \mathbf{x}\right)\right) \bigg] \\
    &= \sum_{s=1}^S \mathbb{E}_{\mathbf{x}_{1:T} \sim q}\bigg[\log p_\theta(\mathbf{z}^{(s)}_0|\mathbf{z}^{(s)}_1, \mathbf{z}^{(<s)}_0) - KL(q(\mathbf{z}_T^{(s)}|\mathbf{z}_0^{(s)}, \mathbf{z}^{(<s)}_0)||p_\theta(\mathbf{z}_T^{(s)} | \mathbf{z}^{(<s)}_0)) \\ 
    &\hspace{1.4in} - \sum_{k=1}^{T-1} KL(q(\mathbf{z}^{(s)}_{k}|\mathbf{z}^{(s)}_{k+1}, \mathbf{z}^{(s)}_0, \mathbf{z}^{(<s)}_0) || p_\theta(\mathbf{z}^{(s)}_{k}|\mathbf{z}^{(s)}_{k+1}, \mathbf{z}^{(<s)}_0))\bigg]
\end{align*}
This produces a similar form to our loss function, with the exception being that $h$ is not volume-preserving. However, this bound is much looser than our proposed bound. We denote as C-VDM a cascading diffusion model \citep{kingma2021variational} designed with this standard non-volume-preserving hierarchical representation \citep{ho2022cascaded} (i.e., Section \ref{sec:prob_invariance}). All model architecture and hyperparameters are retained from W-PCDM and LP-PCDM, except for the choice of $\mathbf{z}^{(1)}, \dots, \mathbf{z}^{(S)}$. In Tables \ref{tab:no_vol_pres_ablation}, \ref{tab:no_vol_pres_ood}, and \ref{table:ood_bb}, we see that the removing volume-preservation property of $h$ significantly degrades likelihood modeling performance. We theorize that this is due to the looser bound obtained by the variational bound derivation, versus the exact equality afforded by Lemma \ref{lem:pwd}.

\begin{table} [h]
  \centering
  \resizebox{.5\textwidth}{!}{
  \begin{tabular}{lcccc}
    \toprule
    \textbf{Model}     & CIFAR10 & ImageNet & ImageNet & ImageNet \\
     & 32x32 & 32x32 & 64x64 & 128x128 \\
    \midrule
    C-VDM &   3.20 & 4.21 & 3.87 & 3.75 \\
    \midrule
    \textit{Our work}\\
    LP-PCDM &   2.36 & 3.52 & 3.12 & 2.91 \\
    \textbf{W-PCDM} &   \textbf{2.35} & \textbf{3.32} & \textbf{2.95} & \textbf{2.64} \\
    \bottomrule
  \end{tabular}
  }
\caption{Ablative study illustrating the importance of the volume-preserving property on a density estimation benchmark. Results are reported in terms of expected negative log likelihood on the test set computed as bits per dimension (BPD). Details in Appendix \ref{sec:ablation}.}
\label{tab:no_vol_pres_ablation}
\end{table}

\begin{table}
\centering
\resizebox{.4\textwidth}{!}{
\begin{tabular}{cccc}
    \toprule
    VDM & C-VDM & LP-PCDM & W-PCDM\\
    \midrule
    SVHN & 0.43 & 0.71 & \textbf{0.74} \\
    Const. Uniform & 0.55 & \textbf{1.0} & \textbf{1.0} \\
    Uniform & \textbf{1.0} & \textbf{1.0} & \textbf{1.0} \\
    Average & 0.66 & 0.90 &\textbf{0.92}\\
    \bottomrule
\end{tabular}
}
\caption{Ablative study illustrating the importance of the volume-preserving property on performance in an out-of-distribution (OOD) benchmark of a CIFAR10 trained model on a selection of anomalous distributions in terms of AUROC. Results from existing models are taken from the literature. Details in Appendix \ref{sec:ablation}.}
\label{tab:no_vol_pres_ood}
\end{table}

\begin{table}
\centering
\begin{tabular}{cc}
    \toprule
    Model & Compression BPD \\
    \midrule
    C-VDM & 3.22 \\
    LP-PCDM &  2.40 \\
    \textbf{W-PCDM} &  \textbf{2.37} \\
    \bottomrule
\end{tabular}
\caption{Ablative study illustrating the importance of the volume-preserving property on lossless compression performance on CIFAR10. Results reported in terms of bits per dimension (BPD). Details in Appendix \ref{sec:ablation}.}
\label{table:ood_bb}
\end{table}

\end{document}